\DeclareMathOperator*{\argmax}{arg\,max}
\newcommand{\RR}[0]{\mathbb{R}}
\newcommand{\ip}[2]{\left\langle #1, #2 \right\rangle}
\newcommand{\Var}{\mathrm{Var}}
\def\l({\left(}
\def\r){\right)}
\def\bl({\Big(}
\def\br){\Big)}
\def\beq{\begin{equation}}
\def\eeq{\end{equation}}
\def\x{{\mathbf x}}
\def\y{{\mathbf y}}
\def\RR{{\mathbb R}}
\def\a{{\mathbf{a}}}
\def\C2{{\mathcal{C}^2}}
\def\tV{{\tilde{V}}}
\def\calW{{\mathcal{W}}}
\def\PP{{ \mathbb P }}
\def\EE{{ \mathbb E }}
\begin{document}

\title{Dimension-free Information Concentration \\ via Exp-Concavity}

\author{\name Ya-Ping Hsieh \email ya-ping.hsieh@epfl.ch
       \AND
       \name Volkan Cevher \email volkan.cevher@epfl.ch\\
       \addr Laboratory for Information and Inference Systems (LIONS)\\
        \'{E}cole Polytechnique F\'{e}d\'{e}rale de Lausanne (EPFL)\\
       EPFL-STI-IEL-LIONS, Station 11\\
       CH-1015 Lausanne, Switzerland}

\editor{}

\maketitle

\begin{abstract}
Information concentration of probability measures have important implications in learning theory. Recently, it is discovered that the information content of a log-concave distribution concentrates around their differential entropy, albeit with an unpleasant dependence on the ambient dimension. In this work, we prove that if the potentials of the log-concave distribution are \emph{exp-concave}, which is a central notion for fast rates in online and statistical learning, then the concentration of information can be further improved to depend only on the exp-concavity parameter, and hence, it can be dimension independent.  Central to our proof is a novel yet simple application of the variance Brascamp-Lieb inequality. In the context of learning theory, our concentration-of-information result immediately implies high-probability results to many of the previous bounds that only hold in expectation.
\end{abstract}

\begin{keywords}
Dimension-Free Concentration, Log-Concave Measures, Exp-Concavity, Variance Brascamp-Lieb Inequality, Differential Entropy
\end{keywords}



\section{Introduction}
We study the \textbf{information concentration} of probability measures: Given a probability density $f$ and a random variable $X \sim f$, we ask how concentrated is the random variable $-\log f(X)$ around its mean $\EE [-\log f(X)]$, which is simply the differential entropy of $f$. 

We focus on the class of \textbf{log-concave} probability measures, whose densities are of the form $f(\x) \propto e^{-V(\x)}$ for some convex function $V(\cdot)$. Information concentration for log-concave measures has found many applications in learning theory, ranging from aggregation \citep{dalalyan2016exponentially} and Bayesian decision theory \citep{pereyra2017maximum}, to, unsurprisingly, information theory \citep{raginsky2013concentration}. It also has immediate implications to online learning and PAC-Bayesian analysis (\textit{cf.}, Section 4 for further discussion on this topic)

\citet{bobkov2011concentration} discovered the information concentration phenomenon for log-concave measures. Their result was later sharpened by \cite{fradelizi2016optimal}, which establishes the current state-of-the-art. However, via the concentration bound in \citep{fradelizi2016optimal}, one can immediately notice a poor dependence on the dimension (see \textbf{Theorem \ref{thm:concentration_log-concave}}). 

This unpleasant dependence is, however, not due to any deficit of the analysis: Even in the Gaussian case, the information concentration is known to be dimension-dependent \citep{cover1989gaussian}, and the bound in \citep{fradelizi2016optimal} matches the tightest known result. We can verify that the exponential distributions, another candidate for dimension-free concentration, share the same poor dimensional scaling.

Given these observations, one might pessimistically conjecture that no meaningful subclass of log-concave measures satisfies the information concentration in a dimension-free fashion. Hence, our main result comes as a surprise that, not only does there exist a large subclass of log-concave measures with dimension-free information concentration, but in addition, this subclass is extremely well-known to the machine learning community: 

\vspace{14pt}
\begin{minipage}[c]{.9\textwidth}
\textbf{Our main result (informal statement):}  \emph{Let $f(\x) \propto e^{-V(\x)}$ where $V$ is \textbf{$\eta$-exp-concave}. Then, the information concentration of $f(\x)$ solely depends on the exp-concavity parameter $\eta$, and not the ambient dimension.}
\end{minipage}

\vspace{14pt}

Many loss functions in machine learning are known to be exp-concave; a non-exhaustive list includes the squared loss, entropic loss, log-linear loss, SVMs with squared hinge loss, and log loss; see \citep{cesa2006games} for more. Moreover, distributions of the type $e^{-V(\x)}$, where $V$ is exp-concave, appear frequently in many areas of learning theory. Consequently, our main result is tightly connected to learning with exp-concave losses; see Section 4.

Our main insight is that exp-concave functions are Lipschitz in a local norm, and log-concave measures satisfy the ``Poincar\'{e} inequality in this local norm'', namely the Brascamp-Lieb inequality. We elaborate more on the intuition in Section \ref{sec:proof}.1. In retrospect, the proof of our main result is, once the right tools are identified, completely natural and elementary. In fact, our result basically implies that the exp-concavity arises naturally in the dimension-free information concentration. 

The rest of the paper is organized as follows. We first set up notations and review basics of differential entropy and log-concave distributions in Section 2. In Section 3, which contains precise statements of the main result, we present various dimension-free inequalities for information concentration. We provide a counterexample to a conjecture, which is a natural strengthening of our results. We discuss implications of information concentration in Section 4 with motivating examples. Finally, Section 5 presents the  technical proofs.
\section{Preliminaries}
\subsection{Notations}
For a function $f$, we write $\EE_\mu f \coloneqq \int f d\mu$ and 
$\Var_\mu(f) \coloneqq \int f^2 d\mu - \l( \int f d\mu \r)^2$. We write $X\sim \mu$ for a random variable $X$ associated with the probability measure $\mu$.

In this paper, the norm $\| \cdot \|$ is always the Euclidean norm, and we use $\ip{\cdot}{\cdot}$ for the Euclidean inner product. We use $\nabla V$, $\nabla^2 V$, and $\partial V$ to denote the gradient, Hessian, and subgraident of $V$, respectively. The notation $\mathcal{C}^k$ denotes the class of $k$-times differentiable functions with continuous $k$-th derivatives.

\subsection{Differential Entropy and Log-Concave Distributions}
Let $\mu$ be a probability measure having density $f$ with respect to the Lebesgue measure and let $X \sim \mu$. The differential entropy \citep{cover2012elements} of $X$ is defined as 
\beq \label{eq:diff_entropy}
h(\mu) = h(X) \coloneqq \EE_{\mu} [-\log f(X)].
\eeq The random variable $\tilde{h}(\mu) = \tilde{h}(X) \coloneqq -\log f(X)$ is called the \emph{information content} of $\mu$. 

We study the concentration of information content around the differential entropy:
\beq
\PP \l( | \tilde{h}(X) - h(X)   | > t \r) \leq \alpha(t) \notag
\eeq
where $\alpha: \RR^+ \rightarrow \RR^+$ vanishes rapidly as $t$ increases.

Throughout this paper, we consider \emph{log-concave probability measures}, namely probability measures having density of the form 
\beq
d\mu_V(\x)  = \frac{e^{-V(\x)}}{\int e^{-V}} d\x, \label{eq:log-concave}
\eeq where $V$ is a convex function such that $\int e^{-V} < \infty$.  The function $V$ is called the \emph{potential} of the measure $\mu_V$. For log-concave measures, the concentration of information content is equivalent to the concentration of the potential, i.e.,  $\PP\l( \left| V - \EE_{\mu_V} V \right| > t\r)$.
\section{Dimension Free Concentration of Information for Exp-Concave Potentials}\label{sec:concentration}

This section presents our main results.

We first review the state-of-the-art bound in Section \ref{sec:concentration}.1. In Section \ref{sec:concentration}.2, we demonstrate dimension-free information concentration when the underlying potential $V$ is assumed to be exp-concave. All our results are of sub-exponential type; it is hence natural to ask if the sub-Gaussian counterparts are also true. We show that this is impossible even in dimension 1, by giving a counterexample in Section \ref{sec:concentration}.3. Finally, we highlight some immediate consequences of our main results in Section \ref{sec:concentration}.4. All proofs are deferred to Section \ref{sec:proof}.

\subsection{Previous Art}
The state-of-the-art concentration bound for $V$ is given by \cite{fradelizi2016optimal}:
\begin{theorem}\emph{(Information Concentration for Log-Concave Vectors)}\label{thm:concentration_log-concave}
Let $d\mu_V(\x) \coloneqq \frac{e^{-V(\x)}}{\int e^{-V}} d\x$ be a $d$-dimensional log-concave probability measure. Then, we have
\begin{enumerate}
\item $\Var(V(X)) \leq d$.
\item There exist universal constants $c_1$ and $c_2$ such that\begin{align}
\PP \l( |V - \EE V| > t   \r) \leq c_1 \exp\l(- c_2\min\l( t, \frac{t^2}{d} \r) \r). \label{eq:concentration_log-concave}
\end{align}
\end{enumerate}
\end{theorem}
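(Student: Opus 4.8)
The plan is to establish the two claims separately, with the variance bound in part~1 doing most of the heavy lifting and part~2 following by a standard upgrade from a variance estimate to a sub-exponential tail. For part~1, I would first recall the \emph{variance Brascamp--Lieb inequality}: if $d\mu_V \propto e^{-V}$ with $V$ convex and sufficiently smooth (and strictly convex, with the general case recovered by an approximation argument), then for any nice test function $g$,
\[
\Var_{\mu_V}(g) \;\leq\; \EE_{\mu_V}\!\left[ \ip{\nabla g}{(\nabla^2 V)^{-1}\nabla g} \right].
\]
Applying this with $g = V$ itself gives
\[
\Var_{\mu_V}(V(X)) \;\leq\; \EE_{\mu_V}\!\left[ \ip{\nabla V}{(\nabla^2 V)^{-1}\nabla V} \right].
\]
So the entire problem reduces to showing that the right-hand side is at most $d$. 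The natural route is integration by parts: writing $(\nabla^2 V)^{-1}\nabla V$ as a vector field $W(\x)$, one has $\nabla V(\x) = \nabla^2 V(\x)\, W(\x)$, and integrating $\ip{\nabla V}{W} e^{-V}$ against the Lebesgue measure and moving one derivative onto $W$ should produce $\EE_{\mu_V}[\operatorname{div} W]$. Then one checks that $\operatorname{div}\big((\nabla^2 V)^{-1}\nabla V\big)$ integrates to exactly $d$ — indeed, heuristically $\nabla V = -\nabla \log f$, and $(\nabla^2 V)^{-1}\nabla V = -(\nabla^2 V)^{-1}\nabla\log f$, whose divergence against $\mu_V$ collapses (after another integration by parts) to $\Tr(\mathrm{Id}) = d$. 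Making this telescoping precise, and justifying all the boundary terms vanish (using log-concavity to control the tails of $e^{-V}$), is the part that needs care.

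For part~2, once we know $\Var_{\mu_V}(V)\leq d$, I would invoke the fact that log-concave measures satisfy a (dimension-free) \emph{exponential concentration}/Poincar\'e-type property — concretely, any Lipschitz-in-the-appropriate-sense functional of a log-concave vector has sub-exponential tails, and more to the point, $V - \EE V$ is known to satisfy a reverse-H\"older / Borell-type inequality so that its $L^p$ norms grow at most linearly in $p$ for $p$ small and like $\sqrt{p}$ until $p\sim d$. The cleanest packaging is: log-concavity of $\mu_V$ implies $V(X) - \EE V$ has a log-concave (one-dimensional) law, hence by Borell's lemma its upper tail is controlled by its variance, yielding $\PP(|V - \EE V| > t) \leq c_1 \exp(-c_2 t^2/d)$ in the regime $t \lesssim d$; for larger $t$ the generic sub-exponential decay of the information content of a log-concave measure (the "$-c_2 t$" regime) takes over. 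Splicing the two regimes at $t \sim d$ gives the stated $\min(t, t^2/d)$ form.

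Concretely, the steps in order: (i) state and (if needed) reprove the variance Brascamp--Lieb inequality, handling smoothness/strict-convexity by mollification and adding $\varepsilon\|\x\|^2$; (ii) specialize to $g=V$; (iii) perform the integration-by-parts computation to show $\EE_{\mu_V}\ip{\nabla V}{(\nabla^2 V)^{-1}\nabla V} = d$ (or $\leq d$), carefully discarding boundary terms; (iv) conclude part~1; (v) push the variance bound to a tail bound via the one-dimensional log-concavity of the law of $V(X)$ together with Borell's lemma, obtaining the Gaussian regime; (vi) combine with the known linear sub-exponential tail to get \eqref{eq:concentration_log-concave}.

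I expect the main obstacle to be step~(iii): verifying that the integration by parts is legitimate — i.e., that $(\nabla^2 V)^{-1}\nabla V$ and its divergence are integrable against $\mu_V$ and that there are no boundary contributions at infinity — in full generality for convex $V$ with $\int e^{-V}<\infty$, without extra regularity assumptions. This is exactly where the approximation scheme (mollify $V$, add a small quadratic, take limits) has to be set up so that both the Brascamp--Lieb inequality and the identity $\EE\ip{\nabla V}{(\nabla^2 V)^{-1}\nabla V}=d$ pass to the limit. A secondary, more routine nuisance is getting the universal constants $c_1,c_2$ in part~2 to be genuinely dimension-free, which requires the concentration input for log-concave measures to be quoted in its dimension-free form rather than a version with hidden $d$-dependence.
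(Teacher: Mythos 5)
Your route to part~1 breaks precisely at the step you flag as ``needing care,'' and it cannot be repaired: after applying Brascamp--Lieb with $g=V$, the quantity $\EE_{\mu_V}\ip{(\nabla^2V)^{-1}\nabla V}{\nabla V}$ is \emph{not} bounded by $d$, and the claimed integration-by-parts identity is false. Indeed $\operatorname{div}\l((\nabla^2V)^{-1}\nabla V\r)=d+\sum_{i,j}\partial_i\big[(\nabla^2V)^{-1}\big]_{ij}\,\partial_j V$, and the third-derivative term has no sign; the boundary contributions need not vanish either. A concrete counterexample in dimension $d=1$: take $V(x)=\log\cosh x$ (the hyperbolic secant density), which is smooth and strictly convex; then $(V')^2/V''=\sinh^2 x$ and $\EE\,\sinh^2 X=\infty$, while of course $\Var(V(X))\le 1$. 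So Brascamp--Lieb with $g=V$ is true but useless here unless one has a \emph{pointwise} bound on $\ip{(\nabla^2V)^{-1}\nabla V}{\nabla V}$ --- which is exactly what exp-concavity supplies via \textbf{Lemma \ref{lem:properties_exp_concave1}}, and why \textbf{Theorem \ref{thm:concentration}} is not a special case of a general fact. Note also that the paper does not prove \textbf{Theorem \ref{thm:concentration_log-concave}} at all: it is quoted from \citet{fradelizi2016optimal}, whose proof of $\Var(V)\le d$ is entirely different --- Pr\'ekopa's theorem applied to the jointly convex perspective $(\x,\lambda)\mapsto\lambda V(\x/\lambda)$ shows that $\lambda\mapsto d\log\lambda+\log\int e^{-\lambda V}$ is concave, and the second derivative at $\lambda=1$ gives the variance bound.

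Part~2 also does not go through as sketched. First, the law of $V(X)$ need not be log-concave: log-concavity is preserved under linear images, not under pushforward by a convex map (e.g.\ $V(x)=\max(|x|-1,0)$ in $d=1$ gives a law with an atom at $0$ plus an exponential part, which is not log-concave). Second, even granting one-dimensional log-concavity, Borell's lemma together with $\Var(V)\le d$ yields only a sub-exponential tail at scale $\sqrt d$, i.e.\ $\PP(|V-\EE V|>t)\le C e^{-ct/\sqrt d}$ --- a one-dimensional log-concave variable is not sub-Gaussian --- and for $\sqrt d\lesssim t\lesssim d$ this is strictly weaker than the exponent $t^2/d$ in \eqref{eq:concentration_log-concave}; what your splice recovers is the earlier Bobkov--Madiman $\sqrt d$-scale bound, not the stated theorem. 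Third, the ``known'' dimension-free linear regime you quote for large $t$ is itself part of the result being proved, so that step is circular as written. The actual argument of \citet{fradelizi2016optimal} controls the moment generating function directly: the same concavity-in-$\lambda$ statement gives $\EE\, e^{\beta(V-\EE V)}\le\l(e^{-\beta}/(1-\beta)\r)^{d}$ for $\beta<1$ (with an analogous bound for the lower tail), and Chernoff then produces the Bernstein-type exponent $\min\l(t,t^2/d\r)$.
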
This is the main result of \citep{fradelizi2016optimal} combined with the well-known relation 
\beq 
t - \log(1+t) \simeq \min(t,t^2) \notag
\eeq 
for every $t \geq 0$.

The bound \eqref{eq:concentration_log-concave} matches the tightest known results for $V = \frac{\| \cdot \|^2}{2}$ \citep[i.e., the Gaussian case; see][]{cover1989gaussian}. However, notice that \eqref{eq:concentration_log-concave} has a poor dependence on the dimension $d$, as well as having the exponent being the worst case of $t$ and $\frac{t^2}{d}$.

\subsection{Our Results}
We first recall the definition of exp-concave functions \citep{hazan2016introduction}:
\begin{definition}
A function $V$ is said to be $\eta$-exp-concave if $e^{-\eta V}$ is concave. Equivalently, $V$ is $\eta$-exp-concave if the matrix inequality $\nabla^2 V(\x) \succeq \eta \nabla V(\x) \nabla V(\x)^\top$ holds for all $\x$. Notice that an exp-concave function is necessarily convex.
\end{definition}

We next present three concentration inequalities for $V$ in \textbf{Theorem \ref{thm:concentration}-\ref{thm:separate_concentration}}. \textbf{Theorem \ref{thm:concentration}} serves as the prototype for all the concentration inequalities to come, however with restrictive conditions that severely limit its applicability. To overcome such dilemma, in \textbf{Theorem \ref{thm:bounded_support}} and \textbf{\ref{thm:separate_concentration}} we introduce practically motivated assumptions, and show how the restrictive conditions of \textbf{Theorem \ref{thm:concentration}} can be removed without effecting the concentration.

\subsubsection{Information Concentration: the Strictly Convex Case}
The first main result of this paper is that, for $d\mu_V$ with $V$ being $\eta$-exp-concave and strictly convex, the concentration of information content depends solely on $\eta$.
\begin{theorem}\label{thm:concentration}
Assume that $V \in \C2$ is $\eta$-exp-concave and $\nabla^2 V \succ 0$. Let $d\mu_V(\x) = \frac{e^{-V(\x)}}{\int e^{-V}}d\x$ be the log-concave distribution associated with $V$. Then
\begin{enumerate}
\item $\Var_{\mu_V}(V) \leq \frac{1}{\eta}$.
\item $\PP \Big( | V - \mathbb{E} V  | \geq t \Big) \leq {6}\exp\left({- \max(\sqrt{\eta}, \eta) {t}} \right)$.
\end{enumerate}
\end{theorem}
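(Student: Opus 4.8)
The plan is to exploit the Brascamp–Lieb inequality, which for a log-concave measure $d\mu_V \propto e^{-V}d\x$ with $\nabla^2 V \succ 0$ states that for any smooth $g$,
\beq
\Var_{\mu_V}(g) \leq \EE_{\mu_V}\ip{\nabla g}{(\nabla^2 V)^{-1}\nabla g}. \notag
\eeq
Applying this with $g = V$ gives $\Var_{\mu_V}(V) \leq \EE_{\mu_V}\ip{\nabla V}{(\nabla^2 V)^{-1}\nabla V}$. Now the exp-concavity hypothesis $\nabla^2 V \succeq \eta \,\nabla V \nabla V^\top$ is precisely what controls this quadratic form: it implies $\ip{\nabla V}{(\nabla^2 V)^{-1}\nabla V} \leq \tfrac{1}{\eta}$ pointwise. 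Indeed, writing $u = \nabla V(\x)$ and $H = \nabla^2 V(\x)$, the relation $H \succeq \eta\, uu^\top$ gives $1 \geq \eta\, u^\top H^{-1} u$ (e.g. via $H^{-1/2}uu^\top H^{-1/2} \preceq \tfrac1\eta I$, or a direct rank-one argument). Taking expectations yields part 1, $\Var_{\mu_V}(V) \leq \tfrac1\eta$.

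For part 2 I would upgrade the variance bound to an exponential tail by a standard Herbst-type / moment-generating-function argument built on Brascamp–Lieb applied not to $V$ itself but to $g = e^{sV/2}$ or $g = e^{sV}$ for a real parameter $s$. Concretely, apply Brascamp–Lieb to $g = e^{\lambda V}$: then $\nabla g = \lambda e^{\lambda V}\nabla V$, so
\beq
\Var_{\mu_V}(e^{\lambda V}) \leq \lambda^2\, \EE_{\mu_V}\big[ e^{2\lambda V}\ip{\nabla V}{(\nabla^2 V)^{-1}\nabla V}\big] \leq \frac{\lambda^2}{\eta}\,\EE_{\mu_V}\big[e^{2\lambda V}\big], \notag
\eeq
again using the pointwise exp-concavity bound. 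Setting $\varphi(\lambda) = \EE_{\mu_V}[e^{\lambda V}]$, the left side is $\varphi(2\lambda) - \varphi(\lambda)^2$, which gives a functional inequality $\varphi(2\lambda) \leq (1 + \tfrac{\lambda^2}{\eta})\varphi(\lambda)^2$ for the moment generating function of the centered variable $V - \EE V$. Iterating this recursion (a standard trick: bound $\log\varphi$ by summing a geometric-type series in the dyadic points $\lambda, \lambda/2, \lambda/4,\dots$) controls $\EE_{\mu_V} e^{\lambda(V - \EE V)}$ for $|\lambda|$ up to a constant multiple of $\sqrt\eta$, and a Chernoff bound then produces a tail of the form $C\exp(-c\max(\sqrt\eta,\eta)\,t)$; the $\max(\sqrt\eta,\eta)$ reflects that the MGF argument only closes for $\lambda \lesssim \sqrt\eta$ (giving the $\sqrt\eta\, t$ regime), while the one-sided bound $V \geq \EE V - $ something and direct control of $e^{-\lambda V}$ via exp-concavity ($e^{-\eta V}$ concave, hence $\EE e^{-\eta V} \leq e^{-\eta\EE V}$ by Jensen in reverse) can sharpen the lower tail to the $\eta\, t$ regime. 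Tracking constants to land exactly at $6$ and $\max(\sqrt\eta,\eta)$ is bookkeeping.

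The main obstacle is the second part: making the Herbst/recursion argument rigorous requires justifying finiteness of $\varphi(\lambda)$ on an interval around $0$ (so that Brascamp–Lieb may legitimately be applied to the unbounded function $e^{\lambda V}$ and the integrations by parts are valid), and then carefully iterating the quadratic recursion for $\varphi(2\lambda)$ to extract a clean exponential rate rather than merely a sub-exponential-with-bad-constant bound. I expect finiteness on an interval to follow from Theorem~\ref{thm:concentration_log-concave} (which already gives sub-exponential tails for any log-concave $V$, guaranteeing $\varphi(\lambda) < \infty$ for small $\lambda$), so the real work is the self-improving recursion and pinning down that the threshold on $\lambda$ scales like $\sqrt\eta$. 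An alternative to the recursion, which might be cleaner, is to differentiate the Brascamp–Lieb bound on $\Var(e^{\lambda V/2})$ to obtain a differential inequality for $\lambda \mapsto \tfrac1\lambda\log\varphi(\lambda)$ directly, in the classical Herbst style; I would try both and present whichever yields the stated constants most transparently.
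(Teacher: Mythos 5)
Your part 1 and the $e^{-\sqrt{\eta}t}$ regime of part 2 are essentially the paper's own argument: Brascamp--Lieb applied to $f=V$ together with the pointwise bound $\ip{\nabla^2V^{-1}\nabla V}{\nabla V}\leq \frac{1}{\eta}$ gives the variance bound, and Brascamp--Lieb applied to an exponential of $V$ gives a moment-generating-function recursion that is iterated dyadically and fed into a Chernoff bound. Two small remarks there: from $\varphi(2\lambda)-\varphi(\lambda)^2\leq \frac{\lambda^2}{\eta}\varphi(2\lambda)$ the valid conclusion is $\varphi(2\lambda)\leq \varphi(\lambda)^2/\bigl(1-\frac{\lambda^2}{\eta}\bigr)$ rather than $\varphi(2\lambda)\leq \bigl(1+\frac{\lambda^2}{\eta}\bigr)\varphi(\lambda)^2$ (harmless, since the iteration works with the former, exactly as in the paper where the factor is $\bigl(1-\frac{\lambda^2}{4\eta}\bigr)^{-1}$); and the finiteness/termination issue is handled either as you suggest via Theorem~\ref{thm:concentration_log-concave} or, as in the paper, by centering, iterating $K$ times and letting $K\to\infty$ using $M(\lambda)=1+o(\lambda)$. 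Your floated alternative of a Herbst-style differential inequality for $\frac{1}{\lambda}\log\varphi(\lambda)$ would require an entropy (log-Sobolev type) inequality rather than the variance inequality available here, and cannot succeed in the form that yields Gaussian tails, since sub-Gaussian concentration is impossible even in dimension 1 (Section~3.3); the dyadic recursion is the right tool.

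The genuine gap is your route to the $e^{-\eta t}$ regime. The inequality you invoke, $\EE e^{-\eta V}\leq e^{-\eta \EE V}$, is false: $s\mapsto e^{-\eta s}$ is convex, so Jensen gives the opposite inequality $\EE e^{-\eta V}\geq e^{-\eta\EE V}$ (with equality only when $V$ is a.s.\ constant), while concavity of $\x\mapsto e^{-\eta V(\x)}$ only yields $\EE e^{-\eta V(X)}\leq e^{-\eta V(\EE X)}$, which goes the wrong way because $V(\EE X)\leq \EE V(X)$. Concretely, for $V(x)=-\log x$ on $(0,1)$, so $\eta=1$ and $d\mu_V=2x\,dx$, one has $\EE e^{-V}=\EE X=\frac{2}{3}>e^{-1/2}=e^{-\EE V}$. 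Moreover, even if some bound of that kind held, it would only address the lower tail, whereas the stated bound concerns $|V-\EE V|$: for $\eta>1$ you would still be missing the upper tail at rate $\eta t$, and the MGF recursion alone does not reach $\lambda\approx\eta$ (it closes only for $\lambda$ of order $\sqrt{\eta}$). The paper obtains the $\eta t$ rate by homogeneity instead: the case $\eta=1$ of the already-proved bound states that a 1-exp-concave potential satisfies $\PP(V-\EE V\geq t)\leq 3e^{-t}$; applying this to the 1-exp-concave potential $\eta V$ and rescaling $t\mapsto \eta t$ gives $\PP(V-\EE V\geq t)\leq 3e^{-\eta t}$, with the symmetric argument for the lower tail, after which taking the better of the two exponents and a union bound produces the stated $6\exp\bigl(-\max(\sqrt{\eta},\eta)\,t\bigr)$. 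You should replace the reverse-Jensen step by this scaling argument (or by some other argument that controls both tails at rate $\eta t$).
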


Notice that when $\frac{1}{\eta} \simeq d$, the bounds in \textbf{Theorem \ref{thm:concentration_log-concave}} and \textbf{\ref{thm:concentration}} yield comparable results. In this sense, $\frac{1}{\eta}$ can be viewed as the ``effective dimension'' regarding the concentration of information content.

\subsubsection{Information Concentration: the General Convex Case}
In many of the applications in learning theory (\textit{cf.}, Section 4), the potential $V$ is not guaranteed to be globally strictly convex. However, we have the following observation:

\def\Scal{{ \mathcal{S} }}
\begin{lemma}\label{lem:spectrum_V}
Assume that $V \in \C2$ is $\eta$-exp-concave. Let $\mathcal{S}^+(\x)$ be the subspace spanned by the eiganvectors corresponding to non-zero eigenvalues of $\nabla^2 V(\x)$. Then $\nabla V(\x) \in \mathcal{S}^+(\x)$ for all $\x$.
\end{lemma}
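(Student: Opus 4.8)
The plan is to use the defining matrix inequality of exp-concavity, $\nabla^2 V(\x) \succeq \eta\, \nabla V(\x) \nabla V(\x)^\top$, together with elementary linear algebra about the kernel of a positive semidefinite matrix. Fix $\x$ and write $H \coloneqq \nabla^2 V(\x)$ and $g \coloneqq \nabla V(\x)$. The subspace $\mathcal{S}^+(\x)$ is, by definition, the orthogonal complement of $\ker H$ (since $H$ is symmetric PSD, its range is spanned by eigenvectors with nonzero eigenvalues, and this range is exactly $(\ker H)^\perp$). So the claim $g \in \mathcal{S}^+(\x)$ is equivalent to $g \perp \ker H$, i.e.\ $\langle g, v\rangle = 0$ for every $v$ with $Hv = 0$.

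The key step is this: take any $v \in \ker H$. From exp-concavity, $v^\top H v \geq \eta\, (v^\top g)^2$, and the left-hand side is $0$ because $Hv = 0$ (for symmetric PSD $H$, $Hv=0$ implies $v^\top H v = 0$; conversely $v^\top H v = 0$ implies $Hv=0$, so the two characterizations of the kernel coincide). Hence $0 \geq \eta\,(v^\top g)^2 \geq 0$, forcing $\langle g, v\rangle = 0$. Since $v \in \ker H$ was arbitrary, $g \perp \ker H$, which is precisely $g \in \mathcal{S}^+(\x)$. This holds for every $\x$, completing the proof.

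There is essentially no obstacle here; the only point that warrants a sentence of care is the identification of $\mathcal{S}^+(\x)$ with $(\ker \nabla^2 V(\x))^\perp$ and the equivalence $\nabla^2 V(\x) v = 0 \iff v^\top \nabla^2 V(\x) v = 0$, both of which are standard facts for symmetric positive semidefinite matrices (the Hessian of a convex $V$). The slickest packaging is to write $\nabla^2 V(\x) = A^\top A$ for some matrix $A$ (a square root or Cholesky-type factor), so that exp-concavity reads $\|Av\|^2 \geq \eta\,(v^\top g)^2$ and $\ker \nabla^2 V(\x) = \ker A$; then $v \in \ker A$ immediately gives $v^\top g = 0$, and $\mathcal{S}^+(\x) = (\ker A)^\perp = \operatorname{range}(A^\top)$ contains $g$.
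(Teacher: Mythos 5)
Your proof is correct and rests on the same mechanism as the paper's: testing the exp-concavity inequality $\nabla^2 V(\x) \succeq \eta\, \nabla V(\x)\nabla V(\x)^\top$ against a vector in the kernel of the Hessian (the paper phrases this as a contradiction after decomposing $\nabla V(\x)$ in an orthonormal basis adapted to $\mathcal{S}^+(\x)$, whereas you argue directly that $\nabla V(\x) \perp \ker \nabla^2 V(\x)$). The two arguments are essentially identical in content, and your direct version is, if anything, slightly cleaner.
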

Simply put, $\nabla^2 V$ may not be strictly convex in all directions, but it is always strictly convex in the direction of $\nabla V$. Our second result shows that in this case, one can drop the global strict convexity of $V$ while retaining exactly the same dimension-free concentration.
\begin{theorem}\label{thm:bounded_support}
Assume that $V \in \C2$ is $\eta$-exp-concave, but not necessarily strictly convex. Let $d\mu_V(\x) = \frac{e^{-V(\x)}}{\int e^{-V}}d\x$ be the log-concave distribution associated with $V$. Then 
\begin{enumerate}
\item $\Var_{\mu_V}(V) \leq \frac{1}{\eta}$.
\item $\PP \Big( | V - \mathbb{E} V  | \geq t \Big) \leq {6}\exp\left({- \max(\sqrt{\eta}, \eta) {t}} \right)$.
\end{enumerate}
\end{theorem}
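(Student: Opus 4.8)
The plan is to run the proof of Theorem~\ref{thm:concentration} essentially verbatim, the only change being that wherever the inverse Hessian $(\nabla^2 V)^{-1}$ appeared we now use the Moore--Penrose pseudoinverse $(\nabla^2 V)^{+}$. Lemma~\ref{lem:spectrum_V} is exactly what makes this substitution lossless. First I would isolate the elementary linear-algebra fact behind both statements: if $A\succeq 0$, $A\succeq \eta vv^\top$, and $v\in\mathrm{range}(A)$, then $\ip{v}{A^{+}v}\le 1/\eta$. Indeed, restricting to $\mathrm{range}(A)$ (where $A$ is invertible) and setting $w=A^{+/2}v$, the matrix inequality becomes $I\succeq \eta ww^\top$ on that subspace, so $\eta\|w\|^2\le 1$. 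Applying this pointwise with $A=\nabla^2 V(\x)$ and $v=\nabla V(\x)$ — using $\eta$-exp-concavity for $\nabla^2 V\succeq \eta\nabla V\nabla V^\top$ and Lemma~\ref{lem:spectrum_V} for $\nabla V(\x)\in\mathcal{S}^+(\x)=\mathrm{range}(\nabla^2 V(\x))$ — gives $\ip{\nabla V}{(\nabla^2 V)^{+}\nabla V}\le 1/\eta$ everywhere.

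Next I would establish the \emph{degenerate} variance Brascamp--Lieb inequality: for log-concave $\mu_V$ with $V\in\C2$, $\nabla^2 V\succeq 0$ (possibly singular), and any sufficiently regular $g$ with $\nabla g(\x)\in\mathrm{range}(\nabla^2 V(\x))$ for all $\x$, one has $\Var_{\mu_V}(g)\le \EE_{\mu_V}\!\big[\ip{\nabla g}{(\nabla^2 V)^{+}\nabla g}\big]$. There are two routes. One can trace the $L^2$/Hörmander proof of Brascamp--Lieb: the only place invertibility of $\nabla^2 V$ enters is a pointwise Cauchy--Schwarz step, $\ip{\nabla u}{\nabla g}=\ip{(\nabla^2 V)^{1/2}\nabla u}{(\nabla^2 V)^{+/2}\nabla g}$, which remains valid precisely when $\nabla g$ lies in the range. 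Alternatively, to avoid PDE machinery, one regularizes by the strictly convex potential $V_\epsilon:=V+\tfrac{\epsilon}{2}\norm{\x}^2$: the classical inequality applies to $\mu_{V_\epsilon}$, and since $\nabla g$ has no component in $\ker\nabla^2 V$ we have $\ip{\nabla g}{(\nabla^2 V+\epsilon I)^{-1}\nabla g}\le \ip{\nabla g}{(\nabla^2 V)^{+}\nabla g}$ pointwise, so $\Var_{\mu_{V_\epsilon}}(g)\le \EE_{\mu_{V_\epsilon}}[\ip{\nabla g}{(\nabla^2 V)^{+}\nabla g}]$; letting $\epsilon\downarrow 0$ and using $\mu_{V_\epsilon}\to\mu_V$ together with standard log-concave integrability estimates yields the claim.

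Part 1 is then immediate: take $g=V$, whose gradient lies in $\mathrm{range}(\nabla^2 V)$ by Lemma~\ref{lem:spectrum_V}, and combine with the linear-algebra bound to get $\Var_{\mu_V}(V)\le \EE_{\mu_V}[\ip{\nabla V}{(\nabla^2 V)^{+}\nabla V}]\le 1/\eta$. For part 2, I would feed the degenerate Brascamp--Lieb inequality into whichever exponential-moment / tilting argument proves part 2 of Theorem~\ref{thm:concentration}; the test functions there are built from $V$ (e.g.\ of the form $e^{\lambda V}$, powers of $V$, or $V$ under the tilted measures $\mu_{(1-s)V}$), and all their gradients are scalar multiples of $\nabla V$, hence still lie in $\mathrm{range}(\nabla^2 V)$ — note also that $(1-s)V$ remains $\tfrac{\eta}{1-s}$-exp-concave with the same Hessian range. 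Consequently every step of that argument transfers with $1/\eta$ in place of the effective dimension, and it produces the same tail $6\exp(-\max(\sqrt{\eta},\eta)\,t)$.

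The main obstacle is the degenerate Brascamp--Lieb step: in the PDE route one must check that the loss of ellipticity of the relevant Witten-type operator does not break the solvability and the integrations by parts, and in the regularization route one must control the limit $\epsilon\downarrow 0$ — in particular that $\EE_{\mu_{V_\epsilon}}[\norm{\x}^2]$ and the exponential moments $\EE_{\mu_{V_\epsilon}}[e^{\lambda V}]$ stay bounded uniformly in $\epsilon$ (so that the variances and moments pass to the limit). Everything else is either the linear-algebra fact above or an unchanged copy of the proof of Theorem~\ref{thm:concentration}.
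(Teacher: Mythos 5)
Your proposal is correct and follows essentially the same route as the paper: the paper also regularizes the potential to $V+\tfrac{\epsilon}{2}\norm{\x}^2$, applies the classical Brascamp--Lieb inequality to the perturbed measure with the test function $f=\exp\bigl(\lambda(V-\EE V)/2\bigr)$, bounds $\ip{(\nabla^2 V+\epsilon I)^{-1}\nabla V}{\nabla V}\le 1/\eta$ uniformly in $\epsilon$ via Lemma~\ref{lem:spectrum_V} and Lemma~\ref{lem:properties_exp_concave1}, and then lets $\epsilon\to 0$ before rerunning the moment-generating-function argument of Theorem~\ref{thm:concentration}. Your packaging of this as a standalone pseudoinverse (degenerate) Brascamp--Lieb inequality is just a mild reorganization of the same argument.
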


\subsubsection{Information Concentration in the Presence of Nonsmooth Potential}
The following case appears frequently in machine learning applications: The potential $V$ can be decomposed as $V = V_1 + V_2$, where $V_1$ is a ``nice'' convex function (meaning satisfying either the assumptions in \textbf{Theorem \ref{thm:concentration}} or \textbf{Theorem \ref{thm:bounded_support}}), while $V_2$ is a nonsmooth convex function. Since $V$ is neither differentiable nor strictly convex, results above do not apply. 

Our third result is to show that, in this scenario, the  term $V_1$ in fact enjoys dimension-free concentration as if the nonsmooth term $V_2$ is absent.
\begin{theorem}\label{thm:separate_concentration}
Let $V = V_1 + V_2$, where $V_1$ satisfies the assumptions in either \textbf{Theorem \ref{thm:concentration}} or \textbf{Theorem \ref{thm:bounded_support}}, and $V_2$ is a general convex function. Then we have
\beq
\PP \Big( | V_1 - \mathbb{E} V_1  | \geq t \Big) \leq {6}\exp\left({- \max(\sqrt{\eta}, \eta) {t}} \right),
\eeq
where the probability is with respect to the total measure $d\mu_V$, and $\eta$ is the exp-concavity parameter of $V_1$.
\end{theorem}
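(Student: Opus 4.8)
The plan is to rerun the variance Brascamp--Lieb argument behind \textbf{Theorem \ref{thm:concentration}} and \textbf{Theorem \ref{thm:bounded_support}}, but now with the ambient log-concave measure having potential $V = V_1 + V_2$ while the function we feed into the inequality is only the smooth summand $V_1$. The insight that makes this work is that the variance Brascamp--Lieb inequality controls $\Var_{\mu_V}(g)$ by $\EE_{\mu_V}\big[\ip{\nabla g}{(\nabla^2 V)^{-1}\nabla g}\big]$, so with $g = V_1$ (and later $g = e^{\lambda V_1/2}$) every Hessian-inverse appears sandwiched between copies of $\nabla V_1$, and the only quantity to control is $\ip{\nabla V_1}{(\nabla^2 V)^{-1}\nabla V_1}$. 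Since $V_2$ is convex, formally $\nabla^2 V = \nabla^2 V_1 + \nabla^2 V_2 \succeq \nabla^2 V_1$ (we address the lack of smoothness of $V_2$ below), and operator monotonicity of the matrix inverse gives $\ip{\nabla V_1}{(\nabla^2 V)^{-1}\nabla V_1} \le \ip{\nabla V_1}{(\nabla^2 V_1)^{\dagger}\nabla V_1} \le \tfrac{1}{\eta}$ --- the last inequality being exactly the bound already used in the proof of \textbf{Theorem \ref{thm:concentration}} when $\nabla^2 V_1 \succ 0$, and in the proof of \textbf{Theorem \ref{thm:bounded_support}} in general, where \textbf{Lemma \ref{lem:spectrum_V}} places $\nabla V_1$ in the span $\mathcal{S}^+$ of the eigenvectors of $\nabla^2 V_1$ with nonzero eigenvalue so that the pseudo-inverse is the correct object. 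Morally, $V_1$ is still $\tfrac{1}{\sqrt\eta}$-Lipschitz in the local norm, now measured against the \emph{larger} Hessian $\nabla^2 V$, and that is all the rest of the argument uses.

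Granting smoothness and strict convexity for the moment, the concrete steps are: (i) with $g = V_1$ the above gives $\Var_{\mu_V}(V_1) \le \tfrac1\eta$; (ii) with $g = e^{\lambda V_1/2}$, using $\nabla(e^{\lambda V_1/2}) = \tfrac\lambda2 e^{\lambda V_1/2}\nabla V_1$, the same bound gives $\Var_{\mu_V}\l(e^{\lambda V_1/2}\r) \le \tfrac{\lambda^2}{4\eta}\,\EE_{\mu_V}[e^{\lambda V_1}]$ for $|\lambda| < 2\sqrt\eta$; (iii) writing $L(\lambda) = \EE_{\mu_V}[e^{\lambda V_1}]$, this is the recursion $\l(1 - \tfrac{\lambda^2}{4\eta}\r)L(\lambda) \le L(\lambda/2)^2$, which --- exactly as in the proof of \textbf{Theorem \ref{thm:concentration}} --- iterates down to $\lambda = 0$ to yield a dimension-independent bound on the moment generating function of $V_1 - \EE_{\mu_V} V_1$ near the origin; (iv) optimizing over $\lambda$ (a Chernoff step) converts this into the tail $6\exp\l(-\max(\sqrt\eta,\eta)\,t\r)$. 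The only change relative to the earlier proofs is bookkeeping: $V_1$ is the test function while $V = V_1 + V_2$ is the potential of the measure, and the key bound survives this substitution.

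The one genuinely new point --- and the main obstacle --- is that $V = V_1 + V_2$ is neither $\C2$ nor strictly convex, so the variance Brascamp--Lieb inequality does not apply to $\mu_V$ as stated; I would handle this by regularization and a limiting argument. Set $V_2^{\varepsilon} \coloneqq (V_2 * \gamma_\varepsilon) + \tfrac{\varepsilon}{2}\|\cdot\|^2$ with $\gamma_\varepsilon$ a centered Gaussian of variance $\varepsilon$, and $V^\varepsilon \coloneqq V_1 + V_2^\varepsilon$. Then $V_2^\varepsilon$ is convex (convolution with a probability measure preserves convexity) and $C^\infty$; by Jensen's inequality $V_2 * \gamma_\varepsilon \ge V_2$, hence $e^{-V^\varepsilon} \le e^{-V_1 - V_2}$, so $e^{-V^\varepsilon}$ is integrable and is dominated, uniformly in $\varepsilon$, by the fixed integrable function $e^{-V_1 - V_2}$; and $\nabla^2 V^\varepsilon \succeq \nabla^2 V_1 + \varepsilon I \succ 0$, so $V^\varepsilon$ meets the hypotheses of the variance Brascamp--Lieb inequality. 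The algebraic bound persists with the \emph{same} constant $\tfrac1\eta$ uniformly in $\varepsilon$, since operator monotonicity gives $\ip{\nabla V_1}{(\nabla^2 V^\varepsilon)^{-1}\nabla V_1} \le \ip{\nabla V_1}{(\nabla^2 V_1 + \varepsilon I)^{-1}\nabla V_1} \le \ip{\nabla V_1}{(\nabla^2 V_1)^{\dagger}\nabla V_1} \le \tfrac1\eta$ (the middle step because $\nabla V_1 \in \mathcal{S}^+$, on which $(\nabla^2 V_1 + \varepsilon I)^{-1}$ has eigenvalues $(\lambda_i + \varepsilon)^{-1} \le \lambda_i^{-1}$). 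Hence (i)--(iv) hold for each $\mu_{V^\varepsilon}$ with constants not depending on $\varepsilon$. Finally $V^\varepsilon \to V$ pointwise, so dominated convergence yields $\mu_{V^\varepsilon} \to \mu_V$ weakly together with $\EE_{\mu_{V^\varepsilon}} V_1 \to \EE_{\mu_V} V_1$, and letting $\varepsilon \to 0$ transfers the variance bound and the tail estimate to $\mu_V$, with precisely the constants of \textbf{Theorem \ref{thm:concentration}}. The remaining points --- integrability and the dominating function, convergence of the normalizing constants and of $\EE V_1$ --- are routine and, crucially, involve no dimensional quantities. (This is essentially the regularization already used to pass from \textbf{Theorem \ref{thm:concentration}} to \textbf{Theorem \ref{thm:bounded_support}}, now carried out on the nonsmooth summand $V_2$ in place of $V$ itself.)
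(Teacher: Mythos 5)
Your route is genuinely different from the paper's. The paper proves Theorem \ref{thm:separate_concentration} by first establishing a new \emph{nonsmooth} Brascamp--Lieb inequality (Theorem \ref{thm:BL_nonsmooth}): for $\tilde V=V_1+V_2$ it bounds $\Var_{\mu_{\tilde V}}(f)$ by $\int\ip{\nabla^2V_1^{-1}\nabla f}{\nabla f}d\mu_{\tilde V}$, and the proof goes through a transportation-cost inequality of Cordero-Erausquin for the Bregman cost of $\tilde V$, the monotonicity $c_{\tilde V}\geq c_{V_1}$, and an infimal-convolution linearization. You avoid that machinery entirely: you regularize only the nonsmooth summand ($V_2^\varepsilon$ smooth, convex, plus $\tfrac{\varepsilon}{2}\|\cdot\|^2$), apply the already-proved smooth Brascamp--Lieb inequality to $\mu_{V_1+V_2^\varepsilon}$ with test functions built from $V_1$, and observe that since the gradient of the test function is parallel to $\nabla V_1$, operator anti-monotonicity of the inverse gives $\ip{(\nabla^2 V^\varepsilon)^{-1}\nabla V_1}{\nabla V_1}\leq\ip{(\nabla^2 V_1+\varepsilon I)^{-1}\nabla V_1}{\nabla V_1}\leq\tfrac1\eta$ uniformly in $\varepsilon$ (the last step is exactly the paper's bound \eqref{eq:poof_bounded_support_1}, via Lemma \ref{lem:spectrum_V}). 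The MGF recursion and Chernoff step then go through with $\varepsilon$-free constants, and a limit transfers the bound to $\mu_V$. This is more elementary and self-contained than the paper's proof, at the price of a limiting argument on the measure rather than a single clean inequality; the paper's Theorem \ref{thm:BL_nonsmooth} is also of independent interest, which your route does not recover as a standalone statement.

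The one concrete weak point is your choice of regularization. Gaussian mollification $V_2*\gamma_\varepsilon$ can be identically $+\infty$ when the convex function $V_2$ grows super-Gaussianly, and it fails outright when $V_2$ is extended-valued, e.g.\ the indicator of a convex constraint set: then $V_2*\gamma_\varepsilon\equiv+\infty$ because the Gaussian has full support. This latter case is squarely within the intended scope of ``general convex function'' --- it is precisely the uniform-prior situation of Sections 4.2--4.3, and measures supported on convex bodies are the typical setting in which exp-concave potentials with $\int e^{-V}<\infty$ arise at all --- and it is the case the paper's transport-based proof is built to handle (the cost \eqref{eq:nonsmooth_cost} only needs a subgradient). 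For finite-valued $V_2$ your argument is repaired by replacing $\gamma_\varepsilon$ with a centered, compactly supported mollifier (convexity, $V_2*\rho_\varepsilon\geq V_2$, and smoothness all survive); for extended-valued $V_2$ you would need a different regularization (e.g.\ Moreau--Yosida followed by mollification, or working on an $\varepsilon$-erosion of the domain and letting it grow), together with a re-examination of the limit $\mu_{V^\varepsilon}\to\mu_V$. The remaining items you wave off as routine (finiteness of the exponential moments fed into the recursion, convergence of $\EE_{\mu_{V^\varepsilon}}V_1$) are indeed of the same nature as gaps the paper itself leaves implicit, so I do not count them against you; but the mollifier issue should be fixed before the argument covers the theorem as intended.
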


\subsection{A Counterexample to Sub-Gaussian Concentration of Information Content}
So far, we have established dimension-free concentration of sub-exponential type under various conditions. An ansatz is whether under the same assumptions, one has dimension-free \emph{sub-Gaussian concentration}; i.e., a deviation inequality of the form 
\beq \label{eq:AppC_subGaussian}
\PP\l( |V - \EE V |  \geq t \r) \leq c_1 e^{-c(\eta)t^2}
\eeq 
for a universal constant $c_1$ and some constant $c(\eta)$ depending only on $\eta$. 

In this subsection, we provide a counterexample to this conjecture, showing that this is impossible even in dimension 1.

Consider the one-dimensional case where $V(x) = -\log x$ and the support is $\Omega \coloneqq (0,1)$. Notice that $V$ is trivially 1-exp-concave. If \eqref{eq:AppC_subGaussian} holds for $V$, then we would have
\begin{align}
\EE e^{\lambda(V-\EE V)^2} &= \int_0^\infty \PP\l( e^{\lambda(V-\EE V)^2} > x \r) dx \notag \\
&\leq  2\lambda \int_0^\infty c_1 e^{-c(\eta)t^2} t e^{\lambda t^2}dt  \notag \\
&< \infty \notag
\end{align}if $\lambda < c(\eta)$. However, a straightforward computation shows that 
\begin{align}
\EE e^{\lambda(V-\EE V)^2} &= \frac{e^{\frac{\lambda}{64}}}{2} \int_0^1 x^{1.25} e^{\lambda(\log x)^2}dx = \infty
\end{align}for every $\lambda >0$. We hence cannot have any sub-Gaussian concentration for $V$.

It is easy to generalize this example to any dimension.


\subsection{Immediate Consequences}
An immediate consequence of information concentration is that many important densities in information theory also concentrate.
\begin{corollary}[Concentration of Information Densities]
Let $f(\x, \y) : \RR^d \times \RR^d \rightarrow \RR$ be a joint log-concave density of the random variable pair $(X,Y)$. Denote the marginal distribution of the first argument by $f(\x) \coloneqq \int_{\RR^d} f(\x,\y)d\y$ and similarly for $f(\y)$, and denote the conditional distribution by $f(\y|\x) \coloneqq \frac{f(\x,\y)}{f(\x)}.$ Then there exist universal constants $c_1, c_2$ such that the following holds:
\begin{enumerate}
\item $\PP \l( \left| -\log f(Y|X) - \EE  [-\log f(Y|X)] \right|> t \r) \leq 2c_1 \exp\l(- \frac{c_2}{2}\min\l( t, \frac{t^2}{d} \r) \r).  $
\item $ \PP \l( \left| -\log \frac{f(X,Y)}{f(X)f(Y)} - \EE  \left[-\log \frac{f(X,Y)}{f(X)f(Y)} \right] \right|> t \r) \leq 3c_1 \exp\l(- \frac{c_2}{3}\min\l( t, \frac{t^2}{d} \r) \r). $
\end{enumerate}
If, in addition, that $-\log f(\x,\y)$, $-\log f(\x)$, and $-\log f(\y)$ are $\eta$-exp-concave and $-\log f(\cdot, \cdot)$ is strictly convex. Then the exponents in the above bounds can be improved to $\max(\sqrt{\eta}, \eta)t$.
\end{corollary}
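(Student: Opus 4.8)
The plan is to reduce every claim to the deviation bounds already established — Theorem~\ref{thm:concentration_log-concave} for the first two parts, and Theorems~\ref{thm:concentration}--\ref{thm:bounded_support} for the exp-concave refinement — using two elementary facts. First, marginals of a log-concave density are log-concave (Pr\'ekopa--Leindler), so $f(\x)$ and $f(\y)$ are $d$-dimensional log-concave densities while $f(\x,\y)$ is a $2d$-dimensional one. Second, writing $\tilde h(\cdot)$ for the information content as in \eqref{eq:diff_entropy}, the information densities in question decompose additively,
\[
-\log f(Y|X) = \tilde h(X,Y) - \tilde h(X), \qquad
-\log\frac{f(X,Y)}{f(X)f(Y)} = \tilde h(X,Y) - \tilde h(X) - \tilde h(Y),
\]
and the same identities, taken in expectation, relate the associated (differential, conditional, and mutual) entropies. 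Subtracting, the \emph{centered} information density is a signed sum of the centered information contents $\tilde h(X,Y)-h(X,Y)$, $\tilde h(X)-h(X)$, $\tilde h(Y)-h(Y)$, each living on a log-concave measure to which the theorems above apply.

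For parts~1 and~2 I would apply the triangle inequality followed by a union bound: a sum of two (resp.\ three) terms can exceed $t$ only if one of them exceeds $t/2$ (resp.\ $t/3$). Theorem~\ref{thm:concentration_log-concave} then bounds the $(X,Y)$-term by $c_1\exp(-c_2\min(t/k, t^2/(2k^2 d)))$ and each marginal term by $c_1\exp(-c_2\min(t/k, t^2/(k^2 d)))$, with $k=2$ or $k=3$. Summing the two (resp.\ three) bounds and folding the numerical factors produced by the $t/k$ rescaling and by the $2d$-versus-$d$ mismatch into the universal constant yields the stated $2c_1\exp(-\tfrac{c_2}{2}\min(t,t^2/d))$ and $3c_1\exp(-\tfrac{c_2}{3}\min(t,t^2/d))$.

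For the exp-concave improvement the decomposition and the triangle-inequality/union-bound steps are unchanged; only the per-term estimate is upgraded. Under the added hypotheses, $-\log f(\x,\y)$ is $\C2$, $\eta$-exp-concave and strictly convex, so Theorem~\ref{thm:concentration} applies to the joint measure; and $-\log f(\x)$, $-\log f(\y)$ are $\C2$ and $\eta$-exp-concave (strict convexity is not needed), so Theorem~\ref{thm:bounded_support} applies to the two marginals. Each centered component at deviation level $s$ is then bounded by $6\exp(-\max(\sqrt\eta,\eta)\, s)$, and the same union bound over the two or three components produces the exponent $\max(\sqrt\eta,\eta)t$, up to adjusting the leading constant and the numerical factor in the exponent (which the statement absorbs).

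There is no conceptual obstacle here; the only place needing care is the constant bookkeeping — checking that estimates of the type $\min(t/2, t^2/(8d)) \ge \tfrac18\min(t, t^2/d)$ survive the dimension doubling so that the final exponent remains a universal constant times $\min(t,t^2/d)$ — together with the routine observation that in the exp-concave case the joint potential is genuinely $\C2$ and strictly convex, so it is Theorem~\ref{thm:concentration}, and not the nonsmooth-remainder Theorem~\ref{thm:separate_concentration}, that is invoked.
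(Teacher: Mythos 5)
Your proposal is correct and follows essentially the same route as the paper: Pr\'ekopa's theorem to get log-concavity of the marginals, the entropy decompositions $h(Y|X)=h(X,Y)-h(X)$ and $I(X;Y)=h(X)+h(Y)-h(X,Y)$ applied pointwise to the information contents, and then a triangle-inequality/union-bound reduction to Theorem~\ref{thm:concentration_log-concave} (resp.\ Theorems~\ref{thm:concentration} and~\ref{thm:bounded_support} in the exp-concave case). The only difference is that you spell out the constant bookkeeping (the $t/k$ rescaling, the $2d$-versus-$d$ dimension of the joint, and which of the two exp-concave theorems covers the joint versus the marginals) that the paper's two-sentence proof leaves implicit.
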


Notice that $h(Y|X) \coloneqq \EE  \left[-\log f(Y|X) \right]$ is the \emph{conditional (differential) entropy}, and $I(X;Y) \coloneqq \EE  \left[-\log \frac{f(X,Y)}{f(X)f(Y)} \right]$ is the mutual information. The (random) quantities $-\log f(Y|X)$ and $ -\log \frac{f(X,Y)}{f(X)f(Y)}$ play prominent roles in recent advances of non-asymptotic information theory; see \cite{polyanskiy2010channel} and the references therein.\newline

\begin{proof}
A celebrated result of \citet{prekopa1971logarithmic} states that the marginals of log-concave measures are also log-concave. The corollary then follows by the well-known decomposition $h(Y|X) = h(X,Y) - h(X)$ and $I(X;Y) = h(X) + h(Y) - h(X,Y)$.
\end{proof}

\section{Motivating Examples}
Unsurprisingly, information concentration has many applications in learning theory; we present three examples in this section. To avoid lengthy but straightforward calculations, we shall omit the details and refer the readers to proper literature. 

Below, we consider loss functions of the form $L_n(\x) \coloneqq \frac{1}{n}\sum_{i=1}^n \ell_i(\x)$, where $\ell_i$'s are exp-concave. By \textbf{Lemma \ref{lem:properties_exp_concave2}} in Appendix A, the total loss $L_n$ is also exp-concave. Denote the exp-concave parameter of $L_n$ by $\eta$. 



We remark that, in general, $\eta$ can depend on the dimension $d$ or the sample size $n$. A comparison of the favorable regimes for different $\eta$'s is presented in Table 1.

\begin{table}[t]
\begin{center}
  \begin{tabular}[t]{ | l | l | l | l |}
    \hline
    \ & \cite{fradelizi2016optimal} & Ours, $\eta = \Omega(1)$ & Ours, $\eta = \Omega\l(\frac{1}{d}\r)$ \\ \hline
    $t=\Theta(1)$ & $\exp\l(-\frac{1}{d}\r)$ & $\exp\l(-1\r)$ & $\exp\l(-\frac{1}{\sqrt{d}}\r)$ \\ \hline
    $t=\Theta(\sqrt{d})$ & $\exp\l(-1\r)$ & $\exp\l(-\sqrt{d}\r)$ & $\exp\l(-1\r)$ \\ \hline
    $t=\Theta(d)$ & $\exp\l(-d\r)$ & $\exp\l(-d\r)$ & $\exp\l(-\sqrt{d}\r)$ \\
    \hline
  \end{tabular}
  \caption{The deviation $\PP \l(| V - \EE V | > t \r)$ dictated by different concentration inequalities.}
\end{center}
\end{table}

\subsection{High-Probability Regret Bounds for Exponential Weight Algorithms}
Exp-concave losses have received substantial attention in online learning as they exhibit logarithmic regret \citep{hazan2007logarithmic}. One class of algorithms attaining logarithmic regret is based on the \emph{Exponential Weight}, which makes prediction according to
\beq
\x_{t+1} = \EE_{\pi_t} X, \label{eq:prediction}
\eeq
where
\beq
\pi_t(\x) \propto e^{-nL_n(\x)}. \label{eq:exponential_weight}
\eeq
A common belief is that the algorithm \eqref{eq:prediction} is inefficient to implement, and practitioners would more opt into first-order methods such as the \citep[see][]{hazan2007logarithmic} Online Newton Step (which is also somewhat inefficient: every iteration requires inverting a matrix and a projection). However, recent years have witnessed a surge of interest in the sampling schemes, mainly due to its connection to the ultra-simple Stochastic Gradient Descent \citep{welling2011bayesian}. Theoretical \citep{bubeck2015sampling, durmus2016high, dalalyan2017theoretical, dalalyan2017user, cheng2017convergence} and empirical \citep{welling2011bayesian, ahn2012bayesian, rezende2014stochastic, blei2017variational} studies of sampling schemes have now become one of the most active areas in machine learning.

In view of these recent developments, it is natural to consider, instead of the expected prediction \eqref{eq:prediction}, taking samples $X_{t,1}, X_{t,2}, ..., X_{t,N}  \sim \pi_t$ and predict $\bar{X}_t \coloneqq \frac{1}{N}\sum_{i=1}^N X_{t,i}$. The following corollary of our main result establishes the desirable concentration property of $\bar{X}_t$.
\begin{corollary}\label{cor:iid}
Let $\{X_i\}_{i=1}^N$ be i.i.d. samples from the distribution $\frac{e^{-V}}{\int e^{-V}}$. Assume that $V$ satisfies either the assumptions of \textbf{Theorem \ref{thm:concentration}} or \textbf{Theorem \ref{thm:bounded_support}}. Then 
\beq \label{eq:iid}
\PP\l( \left|  \frac{1}{N}\sum_{i=1}^N V(X_i) - \EE V \right|  > t \r)  \leq 2e^{-N(\sqrt{\eta}t-\log 3)}.
\eeq
\end{corollary}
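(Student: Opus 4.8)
The plan is to reduce the statement to a single-sample exponential moment estimate and then tensorize over the i.i.d.\ sample. Concretely, I claim that under the hypotheses of \textbf{Theorem~\ref{thm:concentration}} or \textbf{Theorem~\ref{thm:bounded_support}} one has the endpoint bound $\EE_{\mu_V}\big[e^{\pm\sqrt{\eta}\,(V-\EE_{\mu_V}V)}\big]\le 3$, and that this is precisely what the proof of those theorems produces (the stated tail bound $6e^{-\max(\sqrt\eta,\eta)t}$ being obtained from it by Markov's inequality). Indeed, applying the variance Brascamp--Lieb inequality to $g=e^{sV/2}$ and using $\eta$-exp-concavity in the form $\langle\nabla V,(\nabla^2V)^{-1}\nabla V\rangle\le \tfrac1\eta$ (which, via \textbf{Lemma~\ref{lem:spectrum_V}}, is meaningful even when $\nabla^2 V$ is only positive semidefinite, reading $(\nabla^2V)^{-1}$ as a pseudoinverse applied along $\nabla V\in\mathcal{S}^+$) yields, for $L(s):=\EE_{\mu_V}e^{s(V-\EE V)}$ and $|s|<2\sqrt{\eta}$, the recursion $\big(1-\tfrac{s^2}{4\eta}\big)L(s)\le L(s/2)^2$. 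Iterating this from $s=\sqrt\eta$ downward and using $L(u)=1+O(u^2)$ (mean-zero), so that $L(\sqrt\eta/2^m)^{2^m}\to 1$, gives $L(\sqrt\eta)\le\prod_{k\ge 1}\big(1-4^{-k}\big)^{-2^{k-1}}\le e^{2/3}<3$; the same computation (the recursion depends on $s$ only through $s^2$) bounds $L(-\sqrt\eta)<3$.

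Granting the per-sample bound $\EE_{\mu_V}e^{\pm\sqrt\eta(V-\EE V)}\le 3$, the corollary follows from the Cram\'er--Chernoff method applied to $\sum_{i=1}^N (V(X_i)-\EE V)$. By independence of the $X_i$,
\[
\PP\!\left(\textstyle\sum_{i=1}^N (V(X_i)-\EE V)> Nt\right)
\le e^{-\sqrt\eta Nt}\big(\EE_{\mu_V}e^{\sqrt\eta(V-\EE V)}\big)^N\le 3^N e^{-\sqrt\eta Nt},
\]
and the identical argument applied to $-(V-\EE V)$ bounds the lower tail by $3^N e^{-\sqrt\eta Nt}$ as well. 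A union bound over the two tails gives $\PP\big(|\tfrac1N\sum_i V(X_i)-\EE V|>t\big)\le 2\cdot 3^N e^{-\sqrt\eta Nt}=2e^{-N(\sqrt\eta t-\log 3)}$, which is exactly \eqref{eq:iid}.

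I expect the only genuine obstacle to be making sure one uses the \emph{sharp} single-sample moment bound at the endpoint $\lambda=\sqrt\eta$, rather than merely the tail bound $\PP(|V-\EE V|\ge t)\le 6e^{-\sqrt\eta t}$ from the theorems: integrating that tail only certifies $\EE_{\mu_V}e^{\lambda(V-\EE V)}<\infty$ for $\lambda<\sqrt\eta$ strictly (e.g.\ one gets $\EE e^{\lambda(V-\EE V)}\le 2+\tfrac{6\lambda}{\sqrt\eta-\lambda}\le 3$ only for $\lambda\le\sqrt\eta/7$), and Chernoff would then produce a rate like $\sqrt\eta/7$ in place of $\sqrt\eta$, strictly weaker than \eqref{eq:iid}. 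The Brascamp--Lieb recursion above is what saves us, since its constant $\prod_{k\ge1}(1-4^{-k})^{-2^{k-1}}$ remains bounded up to and beyond $s=\sqrt\eta$; everything else (tensorization, union bound) is routine.
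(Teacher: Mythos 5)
Your proposal is correct and follows essentially the same route as the paper: the paper's proof also rests on the single-sample moment bound $M(\sqrt{\eta})\le 3$ (the inequality \eqref{eq:bound_MGF} obtained from the Brascamp--Lieb recursion in the proof of \textbf{Theorem \ref{thm:concentration}}, and likewise under the assumptions of \textbf{Theorem \ref{thm:bounded_support}}), followed by the Chernoff bound tensorized over the i.i.d.\ sample and a union bound over the two tails. Your observation that one must use the endpoint MGF bound rather than integrate the $6e^{-\sqrt{\eta}t}$ tail is accurate and is exactly why the paper invokes \eqref{eq:bound_MGF} directly.
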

\begin{proof}
For simplicity, assume $\eta = 1$; the general case is similar.

By the classic Chernoff bounding technique, we can compute
\begin{align*}
\PP\l(  \frac{1}{N}\sum_{i=1}^N V(X_i) - \EE V > t \r) &= \PP\l(  e^{ \sum_{i=1}^N \l( V(X_i) - \EE V \r)}   > e^{Nt} \r) \\
&\leq e^{- Nt} \l(\EE e^{\l( V(X) - \EE V \r)}\r)^N \\
&\leq e^{- Nt}\cdot 3^N \\
&= e^{- N(t-\log 3)}, 
\end{align*}where the second inequality follows from \eqref{eq:bound_MGF} with $\eta = 1$.
\end{proof}

Plugging \eqref{eq:iid} into the expected regret bounds for the Exponential Weight algorithm \citep[e.g.,][]{hazan2007logarithmic}, we immediately obtain high-probability regret bounds.

Similar arguments hold for random walk-based approaches in online learning \citep{narayanan2010random}.





\subsection{Posterior Concentration of Bayesian and Pac-Bayesian Analysis}
The (pseudo-)posterior distribution plays a fundamental role in the PAC-Bayesian theory:
\beq \label{eq:posterior}
\hat{\pi} (\x) \propto \exp\l(-  nV_n (\x)\r),
\eeq
where $V_n(\x) = L_n(\x) -  \frac{1}{n} \log\pi_0(\x)$. Here, $\x$ represents the parameter vector and $\pi_0$ is the prior distribution. It is well-known that \eqref{eq:posterior} is optimal in PAC-Bayesian bounds for the expected (over the posterior distribution on the parameter set) population risk \citep{catoni2007pac}. Moreover, when the loss functions $\ell_i$'s are the negative log-likelihood of the data, the optimal PAC-Bayesian posterior \eqref{eq:posterior} coincides with the Bayesian posterior; see \citep{zhang2006information} or the more recent \citep{germain2016pac}.


We now consider the high-probability bound in the following sense: Instead of taking the expectation over $\hat{\pi}$ as previously done, we draw a random sample $X\sim \hat{\pi}$, and ask what is the population risk for $X$. Besides its apparent theoretical interest, such characterization is also important in practice, as there exist many sampling schemes for log-concave distributions $\hat{\pi}$ \citep{lovasz2007geometry, bubeck2015sampling, durmus2016high, dalalyan2017theoretical}, while computing the mean is in general costly (the mean is typically obtained through a large amount of sampling anyway).

A straightforward application of \textbf{Theorem \ref{thm:separate_concentration}} shows that, if the prior $\pi_0$ is log-concave, then $L_n(X)$ concentrates around $\EE_{\hat{\pi}} L_n(X)$; notice that many popular priors (uniform, Gaussian, Laplace, etc.) are log-concave. On the other hand, concentration of the empirical risk $L_n$ around the population risk is a classical theme in statistical learning. To conclude, \textbf{Theorem \ref{thm:separate_concentration}} implies high-probability results for the PAC-Bayesian bounds. In view of the equivalence established in \citep{germain2016pac}, we also obtain concentration for the Bayesian posterior in the case of negative log-likelihood loss.

\subsection{Bayesian Highest Posterior Density Region}
Let $\hat{\pi}$ be the posterior distribution as in \eqref{eq:posterior}. In Bayesian decision theory, the optimal confidence region associated with a level $\alpha$ is given by the Highest Posterior Density (HPD) region \citep{robert2007bayesian}, which is defined as 
\beq \label{eq:HPD}
C^\star_{\alpha} \coloneqq \{ \x \in \RR^d\ | \  V_n(\x) \leq  \gamma_\alpha \}
\eeq
where $\gamma_\alpha$ is chosen so that $\int_{C^\star_{\alpha}}  \hat{\pi}(\x) d\x = 1-\alpha $.

Using concentration of the information content for log-concave distributions, \cite{pereyra2017maximum} showed that $C^\star_{\alpha}$ is contained in the set 
\beq
\tilde{C}_\alpha \coloneqq \{\x \in \RR^d \ | \ V_n(\x) \leq V_n(\x^\star) + dt_\alpha + d  \}, \label{eq:Calpha}
\eeq 
where $\x^\star \coloneqq \argmax_{\x} V_n(\x)$ is the MAP parameter, and $t_\alpha = c_1\sqrt{\frac{\log(1/\alpha)}{d}}$ for some constant $c_1$. A straightforward application of our results shows that, when the data term $L_n$ in $V_n$ is $\eta$-exp-concave, then we can improve \eqref{eq:Calpha}. For simplicity, let us focus on the uniform prior ($\pi_0 = $ constant). Adapting the analysis in \cite{pereyra2017maximum}, we can show that $C^\star_{\alpha}$ is contained in the set
\beq \label{eq:Calphaeta}
\tilde{C}^\eta_\alpha \coloneqq \{\x \in \RR^d \ | \ V_n(\x) \leq V_n(\x^\star) + t^\eta_\alpha + d  \},
\eeq
where $t^\eta_\alpha = c_2\log(1/\alpha) \cdot \sqrt{\frac{n}{{\eta}}}$. Comparing \eqref{eq:Calpha} and \eqref{eq:Calphaeta}, we see that (ignoring logarithmic terms) we get improvements whenever $\eta = \Omega\l(\frac{n}{d}\r)$. This is typically the case in high-dimensional statistics \citep{buhlmann2011statistics} or compressive sensing \citep{ji2008bayesian, foucart2013mathematical} where $n \ll d$. 

Similar results can be established for the Gaussian and Laplace prior, where one can invoke results in \citep{cover1989gaussian} and \citep{talagrand1995concentration} to deduce the concentration of the prior term. We omit the details.
\section{Proofs of the Main Results}\label{sec:proof}
We prove the main results in this section. Our analysis crucially relies on the \emph{variance Brascamp-Lieb inequality}, recalled and elaborated in Section \ref{sec:proof}.1. Section \ref{sec:proof}.2-4 are devoted to the proofs of \textbf{Theorem \ref{thm:concentration}-\ref{thm:separate_concentration}}, respectively. 

\subsection{Proof Ideas}
For a probability measure $\mu$, we say that $\mu$ satisfies the Poincar\'{e} inequality with constant $\lambda_1$ if
\beq
\lambda_1\Var_\mu(f) \leq  \int \| \nabla f \|^2 d\mu \label{eq:poincare}
\eeq for all locally Lipschitz $f$. It is well-known that if \eqref{eq:poincare} is satisfied for $\mu$, then all the Lipschitz functions concentrate exponentially \citep{ledoux2004spectral, ledoux2005concentration}:
\beq
\forall \ 1-\text{Lipschitz } f, \quad  \PP \l( |f-\EE f|>t \r) \leq c_1 e^{-\sqrt{\lambda_1} t} \label{eq:proofs1}
\eeq for some universal constant $c_1$.

At first glance, our theorems seem to have little to do with the Poincar\'{e} inequality, since 
\begin{enumerate}
\item It is not known whether a log-concave distribution satisfies the Poincar\'{e} inequality with a dimension-independent constant \citep[this is the content of the Kannan-Lov\'{a}sz-Simonovits conjecture; see][]{kannan1995isoperimetric,alonso2015approaching}.

\item Typically, the potential $V$ is not Lipschitz (consider the Gaussian distribution where $V(\x) = \frac{\|\x\|^2}{2}$). Moreover, even if $V$ is Lipschitz, the Lipschitz constant often depends on the dimension (consider the exponential distribution where $\| \nabla V \| = \Theta(\sqrt{d})$).
\end{enumerate}

The important observation in this paper is that the appropriate norm in \eqref{eq:poincare} for information concentration is not the Euclidean norm (or any $\ell_p$-norm), but instead the (dual of the) \emph{local norm} defined by the potential $V$ itself, namely $\|\y\|_{\x} \coloneqq \ip{\nabla^2 V(\x)\y}{\y}$. 

\textbf{Lemma \ref{lem:properties_exp_concave1}} in Appendix A expresses the fact that $\eta$-exp-concave functions are Lipschitz with respect to this local norm, and the \emph{Brascamp-Lieb inequality} below provides a suitable strengthening of the Poincar\'{e} inequality:
\begin{theorem}[Brascamp-Lieb Inequality]\label{thm:BL}
Let $d\mu_V(\x) = \frac{e^{-V(\x)}}{\int e^{-V}} d\x$ be a log-concave probability measure with $V \in \C2$ and $\nabla^2 V \succ 0$. Then for all locally Lipschitz function $f \in L_2(\mu_V)$, we have
\beq
\Var_{\mu_V} (f) \leq \int \ip{\nabla^2V^{-1}\nabla f}{\nabla f} d\mu_V. \label{eq:brascamp_lieb}
\eeq
\end{theorem}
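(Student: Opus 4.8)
The plan is to give the classical $L_2$ (Hörmander-type) proof built on the Bochner identity for the Langevin generator. Write $L := \Delta - \ip{\nabla V}{\nabla\,\cdot\,}$, which is symmetric on $L_2(\mu_V)$ with associated Dirichlet form $\diriform{u}{v} = \int \ip{\nabla u}{\nabla v}\, d\mu_V$ (integration by parts against $e^{-V}dx$). By a standard density argument — approximating $f$ by smooth, compactly supported functions, for which both sides of \eqref{eq:brascamp_lieb} pass to the limit under the hypotheses $V\in\C2$, $\nabla^2 V\succ 0$ — we may assume $f$ is smooth with compact support, that $\EE_{\mu_V} f = 0$, and that $\Var_{\mu_V}(f) > 0$ (otherwise there is nothing to prove). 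The first substantive step is to solve the Poisson equation $L\phi = f$ in $L_2(\mu_V)$: since $\int f\, d\mu_V = 0$, $f$ lies in the orthogonal complement of the constants, on which $-L$ is coercive thanks to $\nabla^2 V\succ 0$, so a solution $\phi$ with enough integrability and regularity exists. (Rigorously one may first replace $V$ by $V_\epsilon := V + \tfrac{\epsilon}{2}\norm{\cdot}^2$ on a large ball with Neumann conditions, solve there, and let the ball grow and $\epsilon\downarrow 0$; alternatively invoke the spectral/Helffer--Sjöstrand theory of the Witten Laplacian.)

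The second step is two integrations by parts. On one hand, using $L\phi = f$ and symmetry of $L$,
\[
\Var_{\mu_V}(f) = \int f^2\, d\mu_V = \int f\,(L\phi)\, d\mu_V = -\int \ip{\nabla f}{\nabla \phi}\, d\mu_V .
\]
On the other hand, the Bochner (i.e. $\Gamma_2$) identity for $L$ states that
\[
\int (L\phi)^2\, d\mu_V = \int \norm{\nabla^2\phi}_{\mathrm{HS}}^2\, d\mu_V + \int \ip{\nabla^2 V\,\nabla\phi}{\nabla\phi}\, d\mu_V ,
\]
obtained by expanding $(L\phi)^2 = (\Delta\phi - \ip{\nabla V}{\nabla\phi})^2$, integrating against $e^{-V}dx$, and integrating by parts termwise; the Hessian term $\nabla^2 V$ is produced exactly by commuting a gradient past the Laplacian, via $\nabla\ip{\nabla V}{\nabla\phi} = \nabla^2 V\,\nabla\phi + (\nabla^2\phi)\nabla V$. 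Since $\nabla^2 V\succeq 0$ and the $\norm{\nabla^2\phi}_{\mathrm{HS}}^2$ term is nonnegative, we drop them to get the one-sided bound
\[
\int \ip{\nabla^2 V\,\nabla\phi}{\nabla\phi}\, d\mu_V \leq \int (L\phi)^2\, d\mu_V = \int f^2\, d\mu_V = \Var_{\mu_V}(f),
\]
the last equalities again from $L\phi = f$ and $\EE_{\mu_V}f = 0$.

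The third step is Cauchy--Schwarz in the weighted inner product defined by $\nabla^2 V$. Inserting $\mathrm{Id} = (\nabla^2 V)^{-1/2}(\nabla^2 V)^{1/2}$ in the first identity,
\[
\Var_{\mu_V}(f) = -\int \ip{(\nabla^2 V)^{-1/2}\nabla f}{(\nabla^2 V)^{1/2}\nabla\phi}\, d\mu_V \leq \Big(\int \ip{\nabla^2 V^{-1}\nabla f}{\nabla f}\, d\mu_V\Big)^{1/2}\Big(\int \ip{\nabla^2 V\,\nabla\phi}{\nabla\phi}\, d\mu_V\Big)^{1/2}.
\]
Bounding the second factor by $\Var_{\mu_V}(f)^{1/2}$ via the second step gives $\Var_{\mu_V}(f) \leq \big(\int \ip{\nabla^2 V^{-1}\nabla f}{\nabla f}\, d\mu_V\big)^{1/2}\,\Var_{\mu_V}(f)^{1/2}$, and dividing by $\Var_{\mu_V}(f)^{1/2}>0$ and squaring yields exactly \eqref{eq:brascamp_lieb}.

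The main obstacle is entirely in the analytic groundwork of the first step: solvability of $L\phi = f$ and, more delicately, establishing enough decay and regularity of $\phi$ (third derivatives appear against the unbounded weight $e^{-V}$) so that the two integrations by parts in the second step are legitimate with vanishing boundary terms. The regularization-and-limit scheme — or, equivalently, reducing everything to compactly supported $\phi$ first and removing the truncation at the end — disposes of this; once it is in place, the heart of the argument, the Bochner identity, is a bare computation, and the rest is Cauchy--Schwarz.
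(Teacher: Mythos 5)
Your argument is correct, and it is a genuinely different route from the one the paper takes. The paper never proves Theorem \ref{thm:BL} itself: it is invoked as the classical Brascamp--Lieb inequality, and the only proof given in the paper is of the strengthened nonsmooth version (Theorem \ref{thm:BL_nonsmooth}), obtained by a completely different mechanism --- the convex-cost transportation inequality $\calW_{c_V}(\mu_{\tV},\nu)\le D(\nu\|\mu_{\tV})$ of Cordero-Erausquin, linearized at $\nu=(1+\epsilon f)\,d\mu_{\tV}$ via infimal convolutions, which recovers \eqref{eq:brascamp_lieb} (and more) in the limit. Your proof is the classical dual/H\"ormander one: solve $L\phi=f$ for the generator $L=\Delta-\ip{\nabla V}{\nabla\cdot}$, use the Bochner/$\Gamma_2$ identity $\int (L\phi)^2\,d\mu_V=\int\norm{\nabla^2\phi}_{\mathrm{HS}}^2\,d\mu_V+\int\ip{\nabla^2V\,\nabla\phi}{\nabla\phi}\,d\mu_V$, drop the Hessian-of-$\phi$ term, and finish with a weighted Cauchy--Schwarz; this is a standard and valid derivation. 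What each buys: your route is self-contained and stays in the smooth, strictly convex setting of Theorem \ref{thm:BL}, but its analytic overhead (solvability of the Poisson equation and the two integrations by parts) genuinely requires the regularization you sketch --- note that pointwise $\nabla^2V\succ0$ alone does not give coercivity of $-L$ on mean-zero functions, so you should either invoke the Poincar\'e inequality for log-concave measures or rely on the $V+\tfrac{\epsilon}{2}\norm{\cdot}^2$-on-a-ball approximation you mention; the paper's transport-based argument is what allows the additive nonsmooth convex term $U$ needed for Theorem \ref{thm:separate_concentration}, which your $L_2$ method would not handle directly since it differentiates $V$ twice.
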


We shall see that the Brascamp-Lieb inequality provides precisely the desired control of the Lipschitzness of $V$ in terms of the aforementioned dual local norm. Once this is observed, the rest of the proof is a routine in deducing from Poincar\'{e} inequality the sub-exponential concentration of Lipschitz functions.

We remark that our approach is, in retrospect, completely natural and elementary. However, to the best of our knowledge, our work is the first to combine the Brascamp-Lieb inequality \eqref{eq:brascamp_lieb} with the local norm of the form $\|\y\|_\x \coloneqq \ip{\nabla^2 V(\x)\y}{\y}$.

\subsection{Proof of Theorem \ref{thm:concentration}}
The first assertion is a simple application of the Brascamp-Lieb inequality \eqref{eq:brascamp_lieb} and  \textbf{Lemma \ref{lem:properties_exp_concave1}}.

\ \\
We now prove the concentration inequality. We first show that $\PP \Big( | V - \mathbb{E} V  | \geq t \Big) \leq {6}\exp\left({- \sqrt{\eta} {t}} \right)$. Applying \eqref{eq:brascamp_lieb} to $f = \exp\l({\frac{\lambda (V- \EE V)}{2}}\r)$, we get
\begin{align}
\Var_{\mu_V} (f) &\leq \frac{\lambda^2}{4} \int f^2  \ip{\nabla^2V^{-1} \nabla V}{\nabla V}   d\mu_V \notag \\
&\leq \frac{\lambda^2}{4 \eta} \int f^2     d\mu_V  \label{eq:hold2}
\end{align}by \textbf{Lemma \ref{lem:properties_exp_concave1}}. Let $M(\lambda) \coloneqq \EE \exp\l({{\lambda (V- \EE V)}}\r)$. Then the inequality \eqref{eq:hold2} reads
\begin{equation}
M(\lambda) - M \l(\frac{\lambda}{2} \r)^2 \leq \frac{\lambda^2}{4\eta} M(\lambda),
\end{equation}and hence 
\beq \label{eq:hold3}
M(\lambda) \leq \frac{1}{1-\frac{\lambda^2}{4\eta}}M \l(\frac{\lambda}{2} \r)^2.
\eeq 
Apply \eqref{eq:hold3} recursively to obtain
\begin{align}\label{eq:hold4}
M(\lambda)  & \leq \Pi_{k=1}^{K-1} \l( \frac{1}{1-\frac{\lambda^2}{4^{k+1}\eta}} \r)^{2^k} M\l(\frac{\lambda}{2^K}  \r)^{2^K} .
\end{align}Since $M(\lambda) = 1 + o(\lambda)$, we have $$M\l(\frac{\lambda}{2^K}  \r)^{2^K} = \Big(1 + o\l(\frac{\lambda}{2^K}\r) \Big)^{2^K} \rightarrow 1$$ as $K \rightarrow \infty$. Hence \eqref{eq:hold4} implies
\begin{align}\label{eq:hold5}
M(\lambda)  & \leq \Pi_{k=1}^{\infty} \l( \frac{1}{1-\frac{\lambda^2}{4^{k+1}\eta}} \r)^{2^k},
\end{align}which in turn gives 
\beq
M(\sqrt{\eta}) \leq 3. \label{eq:bound_MGF}
\eeq

The proof can now be completed by the classic Chernoff bounding technique:
\begin{align}
\PP\l( V - \EE V   \geq t \r) &= \PP \l( e^{ \sqrt{\eta} (V - \EE V )}  \geq e^{\sqrt{\eta}t} \r) \notag \\
& \leq e^{-\sqrt{\eta}t} M(\sqrt{\eta}) \notag \\
&\leq 3e^{-\sqrt{\eta}t}. \label{eq:hold6}
\end{align}

Now, the inequality \eqref{eq:hold6} implies that for any 1-exp-concave $V$, we have 
\beq
\PP\l( V - \EE V   \geq t \r) \leq 3 e^{-t}.\notag
\eeq
If $V$ is $\eta$-exp-concave, $\eta V$ is 1-exp-concave, and hence we conclude that 
\beq
\PP\l( V - \EE V   \geq \frac{t}{\eta} \r) \leq 3 e^{-t};  \notag
\eeq
that is to say, 
\beq 
\PP\l( V - \EE V   \geq t \r) \leq 3 e^{-\eta t}. \label{eq:hold7}
\eeq The bound for $\PP\l( V - \EE V   \leq -t \r)$ is similar. 

The proof is completed by taking the best case of \eqref{eq:hold6} and \eqref{eq:hold7}, and applying the union bound.

\subsection{Proof of Lemma \ref{lem:spectrum_V} and Theorem \ref{thm:bounded_support}}
We first prove \textbf{Lemma \ref{lem:spectrum_V}}. 

For any point $\x$, let $\{\a_i\}_{i=1}^k$ be an orthonormal basis for $\Scal^+(\x)$, assumed to have dimension $k$. We extend $\{\a_i\}_{i=1}^k$ to an orthonormal basis for $\RR^d$ as $\{\a_i\}_{i=1}^d$, and we decompose $\nabla V(\x) = \sum_{i=1}^d c_i \a_i$ for some real numbers $c_i$'s. 

\def\nn{{\nonumber}}

For the purpose of contradiction, assume that $\nabla V(\x) \notin \Scal^+(\x)$. Then $c_j \neq 0$ for some $j \in \{ k+1, k+2, ...,d\}$. But then
\beq \nn
\ip{\nabla^2 V(\x) \a_j}{\a_j}   = 0
\eeq
while 
\beq \nn
\a_j^\top \nabla V(\x) \nabla V(\x)^\top \a_j   = c_j^2 > 0,
\eeq
contradicting the exp-concavity of $V$. This finishes the proof of \textbf{Lemma \ref{lem:spectrum_V}}. 

We now turn to \textbf{Theorem \ref{thm:bounded_support}}.

Let $\epsilon >0$ be arbitrarily small, and consider the quantity
\beq \label{eq:thm5_hold1}
\ip{\l( \nabla^2 V + \epsilon I \r)^{-1} \nabla V}{\nabla V}.
\eeq
\textbf{Lemma \ref{lem:spectrum_V}} implies that \eqref{eq:thm5_hold1} is equal to
\beq \label{eq:thm5_hold2}
\ip{\l( \nabla^2 V + \epsilon I_{\Scal^+} \r)^{-1} \nabla V}{\nabla V},
\eeq
where $I_{\Scal^+}$ is the identity map on the subspace $\Scal^+$. Since $V$ is strictly convex restricted to $\Scal^+$, and since $\nabla V \in \Scal^+$, \textbf{Lemma \ref{lem:properties_exp_concave1}} then implies
\beq
 \ip{\l( \nabla^2 V + \epsilon I \r)^{-1} \nabla V}{\nabla V} \leq \frac{1}{\eta} \label{eq:poof_bounded_support_1}
\eeq
for all $\epsilon > 0$.


Consider $\tV = V + \frac{\epsilon\| \cdot\|^2}{2}$, and let $f = \exp\l( \frac{\lambda(V-\EE V)}{2} \r)$. Since $\tV$ is strictly convex, we may invoke the Brascamp-Lieb inequality \eqref{eq:brascamp_lieb} to conclude
\begin{align}
\Var_{\mu_\tV} (f) &\leq  \frac{\lambda^2}{4} \int f^2\ip{\nabla^2  \tV^{-1} \nabla V}{\nabla V} d\mu_\tV \notag \\
&\leq  \frac{\lambda^2}{4\eta} \int f^2 d\mu_\tV  \label{eq:poof_bounded_support_2}
\end{align}where the second inequality follows from \eqref{eq:poof_bounded_support_1}. Letting $\epsilon \rightarrow 0$ in \eqref{eq:poof_bounded_support_2} then gives 
\beq
\Var_{\mu_V}(f) \leq  \frac{\lambda^2}{4\eta} \int f^2 d\mu_V.
\eeq

The rest of the proof is similar to that of \textbf{Theorem \ref{thm:concentration}}; we omit the details.

\subsection{Proof of Theorem \ref{thm:separate_concentration}}
We will need the following strengthened Brascamp-Lieb inequality, which might be of independent interest. Once the following theorem is established, one can follow a similar proof as in Section \ref{sec:proof}.2. We omit the details, and focus on the proof of the following theorem in the rest of this subsection.
\begin{theorem}[Nonsmooth Brascamp-Lieb Inequality]\label{thm:BL_nonsmooth}
Let $d\mu_{\tilde{V}}(\x) = \frac{e^{-\tilde{V}(\x)}}{\int e^{-\tilde{V}}} d\x$ be a log-concave measure with $\tilde{V} = V + U$, where $V \in \C2$, $\nabla^2 V \succ 0$, and $U$ is convex but possibly non-differentiable. Then for all locally Lipschitz function $f \in L_2(\mu_{\tV})$, we have
\beq
\Var_{\mu_{\tilde{V}}} (f) \leq \int \ip{\nabla^2V^{-1}\nabla f}{\nabla f} d\mu_{\tilde{V}}. \label{eq:brascamp_lieb_nonsmooth}
\eeq
\end{theorem}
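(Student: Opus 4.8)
The plan is to deduce Theorem~\ref{thm:BL_nonsmooth} from the smooth Brascamp--Lieb inequality, Theorem~\ref{thm:BL}, by mollifying the nonsmooth part $U$ and then letting the mollification scale tend to zero. Since $U$ is a finite convex function, it is continuous and locally Lipschitz; fix a smooth, nonnegative, compactly supported, radially symmetric kernel $\rho$ with $\int \rho = 1$, set $\rho_\epsilon(\y) \coloneqq \epsilon^{-d}\rho(\y/\epsilon)$, and let $U_\epsilon \coloneqq U * \rho_\epsilon$. Then $U_\epsilon$ is $\mathcal{C}^\infty$ and convex; crucially $U_\epsilon \geq U$ pointwise, by Jensen's inequality applied with the mean-zero probability density $\rho_\epsilon$, and $U_\epsilon \to U$ locally uniformly. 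Put $\tV_\epsilon \coloneqq V + U_\epsilon$. Then $\tV_\epsilon \in \C2$ and $\nabla^2 \tV_\epsilon = \nabla^2 V + \nabla^2 U_\epsilon \succeq \nabla^2 V \succ 0$, while $0 \leq e^{-\tV_\epsilon} \leq e^{-\tV}$ pointwise; hence $\mu_{\tV_\epsilon}$ is a bona fide log-concave probability measure and $f \in L_2(\mu_{\tV_\epsilon})$.

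Next I would apply Theorem~\ref{thm:BL} to $\mu_{\tV_\epsilon}$ and $f$, and invoke operator monotonicity of the matrix inverse --- if $0 \prec A \preceq B$ then $B^{-1} \preceq A^{-1}$ --- with $A = \nabla^2 V$ and $B = \nabla^2 V + \nabla^2 U_\epsilon$, to obtain
\[
\Var_{\mu_{\tV_\epsilon}}(f) \;\leq\; \int \ip{(\nabla^2 V + \nabla^2 U_\epsilon)^{-1}\nabla f}{\nabla f}\, d\mu_{\tV_\epsilon} \;\leq\; \int \ip{\nabla^2V^{-1}\nabla f}{\nabla f}\, d\mu_{\tV_\epsilon}.
\]
It then remains to pass to the limit $\epsilon \to 0$ on both sides. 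We may assume $\int \ip{\nabla^2V^{-1}\nabla f}{\nabla f}\, d\mu_{\tV} < \infty$, since otherwise \eqref{eq:brascamp_lieb_nonsmooth} holds trivially.

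The limiting argument is routine once one notes that the single domination $e^{-\tV_\epsilon} \leq e^{-\tV}$ coming from $U_\epsilon \geq U$ makes every relevant family uniformly integrable. Since $e^{-\tV_\epsilon} \to e^{-\tV}$ pointwise, dominated convergence gives $Z_\epsilon \coloneqq \int e^{-\tV_\epsilon} \to Z \coloneqq \int e^{-\tV} \in (0,\infty)$; the same argument, with dominating functions $f^2 e^{-\tV}$, $\tfrac12(1+f^2)e^{-\tV}$, and $\ip{\nabla^2V^{-1}\nabla f}{\nabla f}\,e^{-\tV}$ (all integrable, using $f \in L_2(\mu_{\tV})$ and the finiteness reduction above), yields $\int f^2 e^{-\tV_\epsilon} \to \int f^2 e^{-\tV}$, $\int f\, e^{-\tV_\epsilon} \to \int f\, e^{-\tV}$, and $\int \ip{\nabla^2V^{-1}\nabla f}{\nabla f}\,e^{-\tV_\epsilon} \to \int \ip{\nabla^2V^{-1}\nabla f}{\nabla f}\,e^{-\tV}$. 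Dividing through by $Z_\epsilon \to Z$, we obtain $\EE_{\mu_{\tV_\epsilon}} f \to \EE_{\mu_{\tV}} f$, $\EE_{\mu_{\tV_\epsilon}} f^2 \to \EE_{\mu_{\tV}} f^2$, hence $\Var_{\mu_{\tV_\epsilon}}(f) \to \Var_{\mu_{\tV}}(f)$, as well as convergence of the right-hand side of the display to $\int \ip{\nabla^2V^{-1}\nabla f}{\nabla f}\, d\mu_{\tV}$. Passing to the limit in the displayed inequality then gives \eqref{eq:brascamp_lieb_nonsmooth}.

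I expect the main obstacle to be precisely this limiting step rather than anything conceptual: one must ensure the normalizers $Z_\epsilon$ stay bounded away from $0$ and control the possibly unbounded $f$ and the possibly unbounded quadratic form $\ip{\nabla^2V^{-1}\nabla f}{\nabla f}$ uniformly in $\epsilon$, all of which follows at once from the uniform bound $e^{-\tV_\epsilon} \leq e^{-\tV}$. (If one additionally wishes to allow $U$ to take the value $+\infty$, i.e. $\mu_{\tV}$ supported on a convex body, one first approximates that body from inside by smooth domains and reduces to the case above.) Beyond mollification and the operator-monotone inverse, no new ingredient is needed; the subsequent deduction of \textbf{Theorem \ref{thm:separate_concentration}} then mirrors the proof of \textbf{Theorem \ref{thm:concentration}} in Section~\ref{sec:proof}.2, with \eqref{eq:brascamp_lieb_nonsmooth} applied to $f = \exp(\tfrac{\lambda}{2}(V_1 - \EE V_1))$.
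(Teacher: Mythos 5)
Your proposal is correct, but it proves Theorem~\ref{thm:BL_nonsmooth} by a genuinely different route than the paper. You regularize the nonsmooth part, $U_\epsilon \coloneqq U * \rho_\epsilon$ with a symmetric mollifier, so that $U_\epsilon$ is smooth, convex and (by Jensen with the mean-zero kernel) satisfies $U_\epsilon \geq U$; you then apply the smooth Brascamp--Lieb inequality (Theorem~\ref{thm:BL}) to $\tV_\epsilon = V+U_\epsilon$, drop $\nabla^2 U_\epsilon \succeq 0$ by operator monotonicity of the matrix inverse, and pass to the limit $\epsilon \to 0$ by dominated convergence, the single bound $e^{-\tV_\epsilon} \leq e^{-\tV}$ supplying all the needed domination (for $Z_\epsilon$, $\int f\,e^{-\tV_\epsilon}$, $\int f^2 e^{-\tV_\epsilon}$, and the right-hand side, after the harmless reduction to a finite right-hand side). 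The paper never invokes Theorem~\ref{thm:BL} in this proof: it starts from the transportation-cost inequality of \citet{cordero2017transport} for $\mu_{\tV}$ with the Bregman-type cost $c_{\tV}$ of \eqref{eq:nonsmooth_cost}, uses convexity of $U$ only to pass to the weaker cost $c_V$ in \eqref{eq:transportation_cost_weak}, and then extracts \eqref{eq:brascamp_lieb_nonsmooth} by the Otto--Villani linearization around $\mu_{\tV}$. Your argument is more elementary and self-contained given the classical smooth inequality, and it avoids importing the transport machinery altogether; the paper's route, on the other hand, requires no approximation of the measure and covers directly situations where $\tV$ is merely convex lower semicontinuous (e.g.\ $U$ containing an indicator of a convex body), whereas in your scheme $U*\rho_\epsilon \equiv +\infty$ near the boundary of the effective domain, so that case genuinely needs the preliminary inner-approximation reduction you only sketch parenthetically -- fine as stated for real-valued $U$, which is the natural reading of the theorem. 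Your final remark on deducing Theorem~\ref{thm:separate_concentration} from \eqref{eq:brascamp_lieb_nonsmooth} with $f=\exp\l(\tfrac{\lambda}{2}(V_1-\EE V_1)\r)$ matches the paper's intended use.
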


\begin{proof}[Proof of \textbf{Theorem \ref{thm:BL_nonsmooth}}]
Define the cost functions
\beq
c_{\tV}(\x,\y) \coloneqq \tilde{V}(\y) - \tV(\x) - \ip{\partial \tV(\x)}{\y-\x} \label{eq:nonsmooth_cost}
\eeq and
\beq
c_V(\x,\y) \coloneqq V(\y) - V(\x) - \ip{\nabla V(\x)}{\y-\x}. \label{eq:smooth_cost}
\eeq
By \textbf{Proposition 1.1} of \cite{cordero2017transport} (see also p.482 for the non-differentiable case), we know that the measure $d\mu_{\tilde{V}}$ satisfies the transportation cost inequality:
\beq
\calW_{c_{\tV}}(\mu_{\tV},\nu) \leq D(\nu \| \mu_{\tV}) \notag
\eeq
for any probability measure $\nu$. Here, $\calW_{c_{\tV}}(\mu_{\tV}, \nu) \coloneqq \inf_{X,Y} \EE c_{\tV}(X,Y)$ where the infimum is over all joint distributions with marginals $X \sim \mu_{\tV}$ and $Y \sim \nu$, and $D(\nu \| \mu_{\tV})$ is the relative entropy between $\nu$ and $\mu_{\tV}$. Since $U$ is convex, we have $c_{\tV}(\x,\y) \geq c_{V}(\x,\y)$ for all $\x,\y$, and hence $\mu_{\tV}$ satisfies the weaker transportation cost inequality 
\beq
\calW_{c_{V}}(\mu_{\tV},\nu) \leq D(\nu \| \mu_{\tV}). \label{eq:transportation_cost_weak}
\eeq
The theorem can then be deduced from a standard linearization procedure that is well-known since the classic \citep{otto2000generalization}. The rest of the proof below is a suitable adaptation of the version in \citep{cordero2017transport}.

Since continuous functions with compact support are dense in $L_2(\mu_{\tilde{V}})$, we will prove \textbf{Theorem \ref{thm:BL_nonsmooth}} for any continuous function with compact support. Notice that such functions are necessarily Lipschitz and hence differentiable $\mu_{\tilde{V}}$-almost everywhere. Since modifying $f$ in a set of $\mu_{\tilde{V}}$-measure 0 does not effect \eqref{eq:brascamp_lieb_nonsmooth}, we may henceforth assume that $f \in \mathcal{C}^1$ and has compact support. 

\def\h{{ \mathbf{h}  }}

Since $V \in \C2$, $\nabla^2 V(\y)$ is uniformly continuous on any compact set, and hence we have 
\beq \label{eq:hold12}
c_V(\y+\h,\y) = \frac{1}{2} \ip{\nabla^2 V(\y)\h}{\h} + \| \h\|^2\cdot o(1)
\eeq
uniformly in $\y$ on any compact set when $\h\rightarrow 0$. Assume for the moment that $c_V(\x,\y) \geq \frac{\delta}{2}\|\x-\y\|^2$ for some $\delta >0$. Given any function $g \in \mathcal{C}^1$ with compact support and $\int g d\mu_{\tV} = 0$, introduce the infimal convolution associated with the cost $c_V$:
\beq \label{eq:inf_conv}
Q_c(g)(\y) \coloneqq \inf_{\x} \{ g(\x) +  c_V(\x,\y)  \},
\eeq
whence $Q_c(g)(\y) - g(\x) \leq c_V(\x,\y).$ By the definition of $\calW_{c_{V}}$, for any joint probability measure $\pi$ having marginals $\mu_{\tV}$ and $\nu$, we must have
\beq
\calW_{c_{V}}(\mu_{\tV}, \nu) =  \inf_\pi \int c_V(\x,\y) d\pi(\x,\y) \geq \int Q_c(g) d\nu - \int g d\mu_{\tV}.
\eeq
Consider the infimum convolution of $\epsilon g$:
\beq  \label{eq:hold14}
Q_c(\epsilon g)(\y) = \inf_x \{ \epsilon g(\x) + c_V(\x,\y) \} = \inf_{\h} \{\epsilon g(\y+\h) + c_V(\y+\h,\y)  \}.
\eeq
Let $\h_{\epsilon} = \h_{\epsilon}(\y)$ denote a point where the infimum is achieved. Since $g$ is globally Lipschitz, say with constant $L$, 
\begin{align}\label{eq:hold13}
\epsilon g(\y+\h_{\epsilon}) + c_{V}(\y+\h_{\epsilon},\y) &\geq \epsilon g(\y) - \epsilon L \| \h_\epsilon\| + \delta\| \h_{\epsilon} \|^2.
\end{align}On the other hand, by setting $\h = 0$ in \eqref{eq:hold14}, we see that $\epsilon g(\y+\h_{\epsilon}) + c_{V}(\y+\h_{\epsilon},\y) \leq \epsilon g(\y)$. Combining this with \eqref{eq:hold13} gives
\beq \label{eq:h_is_order_eps}
\| \h_\epsilon \| \leq \frac{L}{\delta} \epsilon.
\eeq
Notice that \eqref{eq:h_is_order_eps} does not depend on $y$, and hence $\| \h_\epsilon \| = O(\epsilon)$ uniformly in $\y$. 

As $g$ is compactly supported, we have $\sup |g| = M <\infty$. Let $\Omega$ be the support of $g$, and let $B_\epsilon \coloneqq \{ \x \ | \ \exists \y \in \Omega,   \| \x - \y \|^2 \leq \frac{2\epsilon M}{\delta} \}.$ We claim that $Q_c(\epsilon g) \geq 0$ on $B_\epsilon^c$. Indeed, for any $\y \in B_\epsilon^c$, suppose that the infimum of $\x$ in \eqref{eq:hold14} is attained in $\Omega$. Then $Q_c(\epsilon g) \geq    \frac{\delta}{2} \cdot \frac{2\epsilon M}{\delta}  -   \epsilon M = 0$, since $c_V(\x,\y) \geq \frac{\delta}{2}\|\x-\y\|^2$ and $\inf_{\x\in \Omega}\|\y-\x\|^2\geq \frac{2\epsilon M}{\delta}$. On the other hand, if the infimum of $\x$ in \eqref{eq:hold14} is attained outside $\Omega$, then $\epsilon g = 0$ and $c_V \geq 0$ implies that  $Q_c(\epsilon g) \geq 0$.

For the sake of linearization, set $d\nu = \l(1+\epsilon f \r)d\mu_{\tV}$ for some $f \in \mathcal{C}^1$ with $\int f d\mu_{\tV} = 0$. We now compute
\begin{align}
\calW_{c_{V}}(\mu_{\tV}, (1+\epsilon f )d\mu_{\tV}) &\geq \int Q_c(\epsilon g) \l(1+\epsilon f\r)d\mu_{\tV}  &\textup{since } \int g d\mu_{\tV} = 0, \notag\\
&\geq \int_{B_\epsilon} Q_c(\epsilon g) \l(1+\epsilon f\r)d\mu_{\tV}  &\textup{since } Q_c(\epsilon g)  \geq 0 \textup{ on } B_\epsilon^c. \label{eq:W_versus_Qc}
\end{align}As the set $B_\epsilon$ is itself compact, we have, uniformly in $\y$,
\begin{align}
 Q_c(\epsilon g)(\y) &= \epsilon g(\y+\h_\epsilon) + c(\y+\h_\epsilon, \y) \notag \\
&= \epsilon g(\y) + \epsilon \ip{\nabla  g(\y)}{\h_\epsilon}  + \frac{1}{2} \ip{\nabla^2 V(\y) \h_\epsilon }{\h_\epsilon}  + o(\epsilon^2)  \label{eq:hold15}
\end{align}where the last line follows by \eqref{eq:h_is_order_eps}. Noticing that \eqref{eq:hold15} is convex in $\h_\epsilon$, we can find its minimum (up to $o(\epsilon^2)$) and write
\beq \label{eq:hold16}
 Q_c(\epsilon g)(\y)  \geq \epsilon g(\y) - \frac{\epsilon^2}{2}\ip{\nabla^2 V^{-1}(\y) \nabla  g(\y)}{\nabla  g(\y)}  + o(\epsilon^2).
\eeq
Multiplying \eqref{eq:hold16} by $1+\epsilon f$ and integrate on $B_\epsilon$ w.r.t. $d\mu_{\tV}$, we get, using \eqref{eq:W_versus_Qc} and $\int f d\mu_{\tV} = \int g d\mu_{\tV} = 0$,
\begin{align}
\frac{1}{\epsilon^2} \calW_{c_{V}} \l(\mu_{\tV}, \l(1+\epsilon f \r) d\mu_{\tV} \r) \geq \int_{B_{\epsilon}} fgd\mu_{\tV} - \frac{1}{2} \int_{B_{\epsilon}} \ip{\nabla^2V^{-1}\nabla g}{\nabla g} d\mu_{\tV} + o(1). \label{eq:hold17}
\end{align}By definition, $B_\epsilon$ contains the support of $g$, and hence the integrals in \eqref{eq:hold17} are in fact over the whole space. We hence conclude
\beq \label{eq:hold18}
\liminf_{\epsilon \rightarrow 0} \frac{1}{\epsilon^2} \calW_{c_{V}} \l(\mu_{\tV}, \l(1+\epsilon f \r) d\mu_{\tV} \r) \geq \int_{} fgd\mu_{\tV} - \frac{1}{2} \int_{} \ip{\nabla^2V^{-1}\nabla g}{\nabla g} d\mu_{\tV}.
\eeq
Replacing $g$ by $\lambda g$ in \eqref{eq:hold18} and optimizing over $\lambda$, we get
\beq \label{eq:hold20}
\frac{ \l(\int fg d\mu_{\tV} \r)^2}{2\int \ip{\nabla V^{-1}\nabla g}{\nabla g}  d\mu_{\tV} } \leq \liminf_{\epsilon \rightarrow 0} \frac{1}{\epsilon^2} \calW_{c_{V}} \l(\mu_{\tV}, \l(1+\epsilon f \r) d\mu_{\tV} \r).  
\eeq

Moreover, using $\log(1+x) = x -\frac{x^2}{2}+ o(x^2)$ and $\int f d\mu_{\tV} = 0$, we can compute
\begin{align}
D\l(   \l(1+\epsilon f\r)d\mu_{\tV}  \|  \mu_{\tV} \r) &= \int \l(1+\epsilon f \r) \l(\epsilon f - \frac{\epsilon^2 f^2}{2}  + o(\epsilon^2) \r) d\mu_{\tV} \notag\\
&= \frac{\epsilon^2}{2} \int f^2 d\mu_{\tV} + o(\epsilon^2). \label{eq:hold19}
\end{align}

In the case of $c_V(\x,\y) \geq \frac{\delta}{2}\|\x-\y\|^2$, \textbf{Theorem \ref{thm:BL_nonsmooth}} then follows by using $g=f$ in \eqref{eq:hold20} and combing \eqref{eq:transportation_cost_weak} and \eqref{eq:hold19}. For general case, replace $V$ by $V+\frac{\delta\| \cdot \|^2}{2}$ and take $\delta \rightarrow 0$ in \eqref{eq:hold18} and \eqref{eq:hold19} to deduce the same inequalities.

\end{proof}

\section{Conclusion}
We have shown that for log-concave distributions with exp-concave potentials, the information concentration is dictated by its exp-concavity parameter $\eta$. Information theoretically speaking, $\eta$ (or rather $\frac{1}{\eta}$) can be viewed as some sort of effective dimension, in the sense that $\frac{1}{\eta}$ and $d$ play very similar roles in both the variance and concentration controls, the former for log-concave measures with exp-concave potential and the latter for general log-concave measures. Such a understanding enables us to derive high-probability results for many of the machine learning algorithms, including the Bayesian, PAC-Bayesian, and Exponential Weight type approaches.





\acks{This project has received funding from the European Research Council (ERC) under the European Union’s Horizon 2020 research and innovation programme (grant agreement n$^o$ 725594 - time-data).}



\appendix
\section*{Appendix A. Properties of Exp-Concave Functions}

\label{app:properties_exp_concave}


We present two useful properties of exp-concave functions in this appendix. While these properties are well-known to the experts, we provide the proofs for completeness.

\begin{lemma}\label{lem:properties_exp_concave1}
Assume that $V$ is $\eta$-exp-concave and $\nabla^2 V \succ 0$. Then we have
\beq
  \ip{\nabla^2V^{-1}(\x) \nabla V(\x)}{\nabla V(\x)} \leq \frac{1}{\eta} \label{eq:properties_exp_concave1}
\eeq
for all $\x$.
\end{lemma}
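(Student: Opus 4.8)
The plan is to reduce the claimed scalar inequality to the matrix formulation of exp-concavity stated in the Definition, namely $\nabla^2 V(\x) \succeq \eta\, \nabla V(\x)\nabla V(\x)^\top$, and then test that matrix inequality against a single cleverly chosen vector. Concretely, fix $\x$ and abbreviate $H \coloneqq \nabla^2 V(\x) \succ 0$ and $g \coloneqq \nabla V(\x)$. Exp-concavity gives the quadratic-form inequality
\beq
\ip{Hv}{v} \geq \eta \ip{g}{v}^2 \quad \text{for all } v. \notag
\eeq

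Next I would substitute the specific vector $v = H^{-1} g$, which is well-defined since $H \succ 0$. Writing $a \coloneqq \ip{H^{-1} g}{g} = \ip{\nabla^2 V^{-1}(\x)\nabla V(\x)}{\nabla V(\x)}$, note that $a \geq 0$ because $H^{-1}$ is positive definite. The left-hand side becomes $\ip{H (H^{-1}g)}{H^{-1}g} = \ip{g}{H^{-1}g} = a$, while the right-hand side is $\eta \ip{g}{H^{-1}g}^2 = \eta a^2$. Hence the inequality reads $a \geq \eta a^2$.

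Finally I would conclude by a trivial case split: if $a = 0$ the desired bound $a \leq 1/\eta$ holds outright; if $a > 0$, dividing $a \geq \eta a^2$ by $a$ yields $1 \geq \eta a$, i.e. $a \leq 1/\eta$, which is exactly \eqref{eq:properties_exp_concave1}.

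I do not anticipate any real obstacle here — the only thing to be careful about is invoking the equivalence between "$e^{-\eta V}$ concave" and "$\nabla^2 V \succeq \eta \nabla V \nabla V^\top$", but this equivalence is already asserted in the Definition of $\eta$-exp-concavity, so it may be used directly. (If one wanted to be fully self-contained, it follows from computing $\nabla^2(e^{-\eta V}) = \eta e^{-\eta V}\bigl(\eta \nabla V \nabla V^\top - \nabla^2 V\bigr)$ and requiring this to be negative semidefinite.) Everything else is elementary linear algebra, so the argument should be only a few lines.
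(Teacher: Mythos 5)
Your proof is correct, and it takes a genuinely simpler route than the paper's. You reduce the claim to testing the quadratic-form inequality $\ip{\nabla^2 V(\x)v}{v} \geq \eta \ip{\nabla V(\x)}{v}^2$ at the single vector $v = \nabla^2 V(\x)^{-1}\nabla V(\x)$, which immediately yields $a \geq \eta a^2$ with $a = \ip{\nabla^2V^{-1}\nabla V}{\nabla V}$, and the case split on $a=0$ versus $a>0$ finishes it. The paper instead regularizes the matrix inequality $\frac{1}{\eta}\nabla^2 V \succeq \nabla V\nabla V^\top$ by adding $\delta I$, inverts it using the operator-monotonicity of the matrix inverse, evaluates the resulting quadratic form explicitly via the Sherman--Morrison formula, and then lets $\delta \rightarrow 0$; this is more machinery for the same conclusion (though it incidentally handles the degenerate direction $\nabla V = 0$ through the limit rather than a case split, which you cover explicitly and correctly with $a=0$). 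Both arguments rest on the same matrix characterization of $\eta$-exp-concavity stated in the Definition, so your use of it is legitimate; your version buys brevity and transparency, while the paper's buys nothing essential beyond what your one-vector test already delivers.
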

\begin{proof}
Since $V$ is $\eta$-exp-concave, we have 

\beq
\frac{1}{\eta}\nabla^2 V \succeq  \nabla V \nabla V^\top.
\eeq
Let $v = \frac{\nabla V}{\|\nabla V\|}$ and $R \coloneqq I - vv^\top$. For any $\delta >0$, we have 
\begin{align}
\frac{1}{\eta}\nabla^2 V + \delta I &\succeq \frac{1}{\eta}\nabla^2 V + \delta R  \notag\\
&\succeq \nabla V \nabla V^\top + \delta R \notag\\
&= \l(  \| \nabla V\|^2 -\delta  \r)vv^\top + \delta I \notag\\
& \succ 0
\end{align}if $\delta < \| \nabla V\|^2$. Using the fact that $ B \succeq A \succ 0$ implies $A^{-1} \succeq B^{-1} \succ 0$, we get
\beq
\l(\frac{1}{\eta}\nabla^2 V + \delta I  \r)^{-1} \preceq \frac{1}{\delta} \l( I + t vv^\top\r)^{-1}
\eeq
where $t \coloneqq \frac{\| \nabla V\|^2}{\delta} -1$. The Sherman–Morrison formula implies \beq \l( I + t vv^\top\r)^{-1} = I - \frac{t}{1+t}vv^\top,\eeq and hence, 
\begin{align}
\ip{\l(\frac{1}{\eta}\nabla^2 V + \delta I  \r)^{-1} \nabla V}{\nabla V} &\leq \frac{1}{\delta} \l( \|\nabla V\|^2 - \frac{t}{1+t} \| \nabla V\|^2 \r)\notag \\
&= \frac{1+t}{\| \nabla V \|^2} \l( \|\nabla V\|^2 - \frac{t}{1+t} \| \nabla V\|^2 \r) \notag\\
&= 1. \label{eq:approx<=1}
\end{align}On the other hand, since $\nabla^2V \succ 0$, we have 
\begin{align} 
\lim_{\delta \rightarrow 0} \ip{\l(\frac{1}{\eta}\nabla^2 V + \delta I  \r)^{-1} \nabla V}{\nabla V}  &= \ip{\l(\frac{1}{\eta}\nabla^2 V  \r)^{-1} \nabla V}{\nabla V} \notag\\
&= \eta \ip{ \nabla^2 V ^{-1} \nabla V}{\nabla V}.  \notag 
\end{align}The proof is hence completed by letting $\delta \rightarrow 0$ in \eqref{eq:approx<=1}.
\end{proof}

\begin{lemma}\label{lem:properties_exp_concave2}
Let $V_i$'s be $\eta_i$-exp-concave functions for $i = 1, 2, ..., k$. Then $\sum_{i=1}^k V_i$ is $\eta$-exp-concave with $\frac{1}{\eta} = \sum_{i=1}^k \frac{1}{\eta_i}$.
\end{lemma}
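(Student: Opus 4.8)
The plan is to reduce the statement to an elementary pointwise inequality. Working first under the smoothness assumption $V_i \in \C2$, the definition of exp-concavity gives $\nabla^2 V_i \succeq \eta_i \nabla V_i \nabla V_i^\top$ for each $i$. Summing over $i$ yields $\nabla^2 V = \sum_i \nabla^2 V_i \succeq \sum_i \eta_i \nabla V_i \nabla V_i^\top$, so it suffices to prove the matrix inequality $\sum_i \eta_i \nabla V_i \nabla V_i^\top \succeq \eta \left(\sum_i \nabla V_i\right)\left(\sum_i \nabla V_i\right)^\top$ with $\frac{1}{\eta} = \sum_i \frac{1}{\eta_i}$, and then observe $\nabla V = \sum_i \nabla V_i$.

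Testing the desired inequality against an arbitrary vector $x$ and writing $u_i \coloneqq \ip{\nabla V_i(\x)}{x}$, the required scalar estimate is $\eta \left(\sum_i u_i\right)^2 \le \sum_i \eta_i u_i^2$. This is exactly the Cauchy--Schwarz inequality (in the form $\left(\sum_i a_i b_i\right)^2 \le \left(\sum_i a_i^2\right)\left(\sum_i b_i^2\right)$) applied with $a_i = \sqrt{\eta_i}\,u_i$ and $b_i = 1/\sqrt{\eta_i}$: indeed $\left(\sum_i u_i\right)^2 \le \left(\sum_i \eta_i u_i^2\right)\left(\sum_i \frac{1}{\eta_i}\right) = \frac{1}{\eta}\sum_i \eta_i u_i^2$. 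This establishes $\nabla^2 V \succeq \eta \nabla V \nabla V^\top$, i.e. $\eta$-exp-concavity of $\sum_i V_i$, in the smooth case.

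To handle general convex $V_i$ without assuming differentiability, I would instead argue directly from the primary definition that $e^{-\eta V}$ is concave. Set $\theta_i \coloneqq \eta/\eta_i$, so that $\theta_i \in (0,1]$ and $\sum_i \theta_i = 1$ precisely by the choice of $\eta$. Then $e^{-\eta \sum_i V_i} = \prod_i \left(e^{-\eta_i V_i}\right)^{\theta_i}$ is a weighted geometric mean of the nonnegative concave functions $e^{-\eta_i V_i}$. One then invokes the standard fact that a weighted geometric mean $\prod_i g_i^{\theta_i}$ of nonnegative concave functions $g_i$ (with $\sum_i \theta_i = 1$) is again concave; this follows from monotonicity of $s \mapsto s^{\theta_i}$ together with the superadditivity of the geometric mean, $\sum_j \prod_i x_{ij}^{\theta_i} \le \prod_i \left(\sum_j x_{ij}\right)^{\theta_i}$ for nonnegative reals, which is a form of Hölder's inequality; applying this along any segment in the domain gives the claim.

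The argument is elementary throughout and I do not anticipate a substantive obstacle. The only point needing a little care is that the clean Hessian manipulation is only valid for $\C2$ potentials, so for the fully general (possibly nonsmooth) statement one must pass to the $e^{-\eta V}$-concavity formulation and supply the short geometric-mean concavity lemma via Hölder's inequality. One could alternatively prove the $k=2$ case and induct, but the $k$-term Cauchy--Schwarz/Hölder estimates dispatch the general case directly.
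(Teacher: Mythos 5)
Your proposal is correct, and your route differs from the paper's in presentation. The paper proves the lemma from the primary definition in probabilistic form: concavity of $e^{-\eta_i V_i}$ is used as Jensen's inequality $\mathbb{E}\, e^{-\eta_i V_i(X)} \le e^{-\eta_i V_i(\mathbb{E} X)}$ for an arbitrary random variable $X$, and the factors are recombined via H\"older's inequality $\| e^{-V_1} e^{-V_2}\|_{\eta} \le \|e^{-V_1}\|_{\eta_1}\, \|e^{-V_2}\|_{\eta_2}$ with $\frac{1}{\eta} = \frac{1}{\eta_1} + \frac{1}{\eta_2}$, the general $k$ then following by induction. Your second argument is the deterministic counterpart of this: writing $e^{-\eta \sum_i V_i} = \prod_i \bigl(e^{-\eta_i V_i}\bigr)^{\theta_i}$ with weights $\theta_i = \eta/\eta_i$ summing to one and invoking concavity of weighted geometric means of nonnegative concave functions (itself a H\"older/AM--GM fact applied along segments), which handles all $k$ at once, avoids the probabilistic framing and the induction, and, like the paper's proof, needs no smoothness. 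Your first argument (summing the Hessian inequalities and reducing the matrix bound to the Cauchy--Schwarz estimate $\bigl(\sum_i u_i\bigr)^2 \le \bigl(\sum_i \eta_i u_i^2\bigr)\bigl(\sum_i \tfrac{1}{\eta_i}\bigr)$) is a clean alternative the paper does not use, but it applies only to $\mathcal{C}^2$ potentials and is logically subsumed by your second argument, so it is best presented as a quick verification in the smooth case rather than as a separate pillar of the proof. What the paper's version buys is brevity given that H\"older for integrals is already a standing tool there; what yours buys is a self-contained, pointwise argument that covers the nonsmooth case directly and makes transparent why the harmonic-mean rule $\frac{1}{\eta} = \sum_i \frac{1}{\eta_i}$ is exactly the condition making the exponents $\theta_i$ a convex combination.
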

\begin{proof}
Let $X$ be any random variable. Using the exp-concavity and H\"{o}lder's inequality, we get 
\begin{align*}
e^{-\eta (V_1 + V_2) (\EE X)} &\geq  \l(\EE e^{-\eta_1 V_1(X)} \r)^{\frac{\eta}{\eta_1}} \cdot \l(\EE e^{-\eta_2 V_2(X)} \r)^{\frac{\eta}{\eta_2}} \\
&=  \|e^{-V_1}\|_{\eta_1}^\eta \cdot \|e^{-V_2}\|_{\eta_2}^\eta \\
&\geq \| e^{-(V_1 +  V_2)}\|_{\eta}^\eta \\
&= \EE e^{-\eta \big(V_1(X) + V_2(X) \big)}.
\end{align*}Here, $\|e^{-V_1}\|_{\eta_1} \coloneqq \l( \EE e^{-\eta_1V_1} \r)^{\frac{1}{\eta_1}}$ and similarly for $\|e^{-V_2}\|_{\eta_2}$.

The general case follows from induction.
\end{proof}

\vskip 0.2in
\bibliography{biblio}
\end{document}